
\documentclass{article}

\usepackage{microtype}
\usepackage{graphicx}
\usepackage{subfigure}
\usepackage{booktabs} 

\usepackage{hyperref}


\usepackage{amsmath}
\usepackage{amssymb}
\usepackage{mathtools}
\usepackage{dsfont}
\usepackage{bm}
\usepackage{cleveref}
\usepackage{amsthm}
\usepackage{graphicx}

\newcommand{\lr}[1]{\left(#1\right)}
\newcommand{\lrs}[1]{\left[#1\right]}
\newcommand{\lrscal}[1]{\left\langle#1\right\rangle}
\newcommand{\lrc}[1]{\left\{#1\right\}}
\newcommand{\R}{\mathcal R}

\newcommand{\prob}[1][]{\ifthenelse{\isempty{#1}{}}{\mathbb P}{\mathbb P \left[#1\right]}}
\newcommand{\E}[1][]{\mathbb E \left[#1\right]}

\DeclareMathOperator*{\argmax}{arg\,max}
\DeclareMathOperator*{\argmin}{arg\,min}
\renewcommand{\Pr}{\mathbb{P}}

\newcommand{\Ind}[1]{\mathds{1}{\lr{{#1}}}}

\DeclarePairedDelimiter{\floor}{\lfloor}{\rfloor}
\newcommand\numberthis{\addtocounter{equation}{1}\tag{\theequation}}

\newcommand{\scO}{\mathcal{O}}

\newcommand{\algoname}[1]{\ifmmode\operatorname{#1}\else{\normalfont #1}\fi}
\newcommand{\TsallisB}{\algoname{Tsallis-Switch}}

\newtheorem{theorem}{Theorem}
\newtheorem{lemma}[theorem]{Lemma}

\newtheorem{corollary}[theorem]{Corollary}


\usepackage[accepted]{4arXiv}

\icmltitlerunning{An Algorithm for Stochastic and Adversarial Bandits with Switching Costs}

\begin{document}

\twocolumn[
\icmltitle{An Algorithm for Stochastic and Adversarial Bandits with Switching Costs}



\icmlsetsymbol{equal}{*}

\begin{icmlauthorlist}
\icmlauthor{Chlo{\'e} Rouyer}{dk}
\icmlauthor{Yevgeny Seldin}{dk}
\icmlauthor{Nicol{\`o} Cesa-Bianchi}{it}
\end{icmlauthorlist}

\icmlaffiliation{dk}{Department of Computer Science, University of Copenhagen, Denmark}
\icmlaffiliation{it}{DSRC \& Dept. of Computer Science, Universit{\`a} degli Studi di Milano, Milano, Italy}

\icmlcorrespondingauthor{Chlo{\'e} Rouyer}{chloe@di.ku.dk}

\icmlkeywords{Machine Learning, ICML, Bandits, Switching Cost}

\vskip 0.3in
]



\printAffiliationsAndNotice{}  

\begin{abstract}
We propose an algorithm for stochastic and adversarial multiarmed bandits with switching costs, where the algorithm pays a price $\lambda$ every time it switches the arm being played. Our algorithm is based on adaptation of the Tsallis-INF algorithm of \citet{ZS21} and requires no prior knowledge of the regime or time horizon. 
In the oblivious adversarial setting it achieves the minimax optimal regret bound of $\scO\big((\lambda K)^{1/3}T^{2/3} + \sqrt{KT}\big)$, where $T$ is the time horizon and $K$ is the number of arms. In the stochastically constrained adversarial regime, which includes the stochastic regime as a special case, it achieves a regret bound of $\scO\left(\big((\lambda K)^{2/3} T^{1/3} + \ln T\big)\sum_{i \neq i^*} \Delta_i^{-1}\right)$, where $\Delta_i$ are the suboptimality gaps and $i^*$ is a unique optimal arm. In the special case of $\lambda = 0$ (no switching costs), both bounds are minimax optimal within constants. We also explore variants of the problem, where switching cost is allowed to change over time. We provide experimental evaluation showing competitiveness of our algorithm with the relevant baselines in the stochastic, stochastically constrained adversarial, and adversarial regimes with fixed switching cost.
\end{abstract}

\section{Introduction}
\label{sec:intro}


Multiarmed bandits are the reference framework for the study of a wide range of sequential decision-making problems, including recommendation, dynamic content optimization, digital auctions, clinical trials, and more. In many application domains, algorithms have to pay an additional penalty $\lambda > 0$ each time they play an arm different from the one played in the previous round. Such switching cost may occur in the form of a transaction cost in financial trading, or a reconfiguration cost in industrial environments.

So far, the problem of bandits with switching costs has been studied using algorithms whose optimality depends on the nature of the source of losses (or, equivalently, rewards) for the $K$ arms. In the oblivious adversarial case, when losses are generated by an arbitrary deterministic source, \citet{DekelTA12} use a simple variant of the Exp3 algorithm to prove an upper bound of $\scO\big((K\ln K)^{1/3}T^{2/3}\big)$ for $\lambda=1$ (i.e., unit switching cost), where $T$ is the time horizon ---see also \citep{blum2007learning} for an earlier, slightly weaker result. A result by \citet{DDKP13} implies a lower bound of $\Omega\big((\lambda K)^{1/3}T^{2/3} + \sqrt{KT}\big)$ for all $\lambda \ge 0$. Note the sharp transition: if $\lambda > 0$, then the regret asymptotically grows as $T^{2/3}$ as opposed to $\sqrt{T}$ when there is no switching cost.

In the stochastic case, where losses for each arm are generated by an i.i.d.\ process, \citet{NIPS2019_8341} and \citet{esfandiari2019regret} use arm elimination algorithms to prove that $\scO(\ln T)$ switches are sufficient to achieve the optimal distribution-dependent regret of $\scO\big((\ln T)\sum_{i\,:\, \Delta_i > 0} \Delta_i^{-1}\big)$, where $\Delta_i$ is the suboptimality gap of arm $i$. Hence, in the stochastic case the introduction of switching costs does not lead to a qualitative change of the minimax regret rate.

In practical applications, it is desirable to have algorithms that require no prior knowledge about the nature of the loss generation process, maintain robustness in the adversarial regime, and have the ability to achieve lower regret in the stochastic case. A number of such algorithms have been developed for the standard multiarmed bandits \citep{BS12,SS14,AC16,SL17,WL18,ZS19,ZS21} and the ideas have been extended to several other domains, including combinatorial bandits \citep{ZLW19}, decoupled exploration and exploitation \citep{RS20}, and episodic MDPs \citep{JL20}. We aim at designing algorithms with similar properties for bandits with switching costs.

\subsection*{Main contributions}
Our starting point is the Tsallis-INF algorithm of \citet{ZS21}, which was shown to achieve minimax regret rates in both stochastic and adversarial regimes for standard bandits. We introduce a modification of this algorithm, which we call \TsallisB, to take care of the switching costs. In the adversarial regime, the regret bound of \TsallisB\  matches (within constants) the minimax optimal regret bound $\Theta\big((\lambda K)^{1/3}T^{2/3} + \sqrt{KT}\big)$ for any value of $\lambda \ge 0$. In the stochastically constrained adversarial regime, which includes the stochastic regime as a special case, we prove a bound $\scO\left(\big((\lambda K)^{2/3} T^{1/3} + \ln T\big)\sum_{i \neq i^*} \Delta_i^{-1}\right)$, where $i^*$ is a unique optimal arm. Note that, in the special case of $\lambda = 0$ (no switching costs), we recover (up to constant factors) the minimax optimal bounds of Tsallis-INF for both regimes. Similarly to Tsallis-INF, our algorithm is fully oblivious to both the regime and the time horizon $T$.

\TsallisB, which runs Tsallis-INF as a subroutine, uses the standard tool to control the frequency of arm switching: game rounds are grouped into consecutive blocks $B_1,B_2,\ldots$, and \TsallisB\ runs Tsallis-INF over the blocks, preventing it from switching arms within each block. The number of switches is thus bounded by the number of blocks. Since $T$ is unknown, we use block sizes of increasing length. As a new arm is drawn only at the beginning of each block, the effective range of the losses experienced by Tsallis-INF grows with time. Therefore, we modify the analysis of Tsallis-INF to accommodate losses of varying range. This extension may potentially be of independent interest.

\section{Problem Setting and Notations}

We consider a repeated game with $K$ arms and a switching cost $\lambda \ge 0$. At each round $t = 1, 2, \dots$ of the game, the environment picks a loss vector $\ell_t \in [0, 1]^K$, and the algorithm chooses an arm $J_t \in [K]$ to play. The learner then incurs the loss $\ell_{t, J_t}$, which is observed. If $J_t \neq J_{t-1}$, then the learner also suffers an extra penalty of $\lambda$. We use the same setting as \citet{DDKP13}, and assume that $J_0 = 0$, which means that there is always a switch at the first round.

We consider two regimes for the losses. In the oblivious adversarial regime, the loss vectors $\ell_t$ are arbitrarily generated by the environment and do not depend on the actions taken by the learner.
We also work in the stochastically constrained adversarial regime. This setting, introduced by \citet{WL18}, generalizes the widely studied stochastic regime by allowing losses to be drawn from distributions with fixed gaps. This means that at for all $i$, $\E[\ell_{t,i}]$ can fluctuate with $t$, but for all $i, j$,  $\E[\ell_{t,i} - \ell_{t,j}] = \Delta_{i,j}$ remains constant. The suboptimality gaps are then defined as $\displaystyle \Delta_i = \Delta_{i,1} - \min_j \Delta_{j,1}$.

We define the pseudo-regret with switching costs as follows,
\begin{align*}
     \text{RS}(T, \lambda)
&=
    \E[\sum_{t = 1}^T \ell_{t, J_t}] -  \min_i\E[ \sum_{t = 1}^T\ell_{t, i}]
\\&
     + \lambda \sum_{t = 1}^T \Pr(J_{t - 1} \neq J_t)\label{RST}
\\&=
    R_T + \lambda\,S_T, \numberthis
\end{align*}
where $J_t$ is the action played by the learner at round $t$, and $\lambda$ is the switching cost.
We recognize that the $R_T = \text{RS}(T, 0)$ is the classical definition of the pseudo regret (without switching costs), while $S_T$ counts the expected number of switches.
Furthermore, we recall that in the stochastically constrained adversarial regime, the pseudo-regret can be rewritten in terms of the sub-optimality gaps, as:
\begin{equation}
    \text{R}_T= \sum_{t = 1}^T \sum_{i=1}^K \E[p_{t, i}]\Delta_i\label{RTstoc},
\end{equation}
where $p_{t, i}$ is the probability of playing action $i$ at round $t$. 

\section{Working with Blocks}
\label{sec:blocks}
In order to control $S_T$, we limit the number of switches between actions that the algorithm makes by dividing game rounds into blocks and forcing the algorithm to play the same action for all the rounds within a block.
Given a sequence of blocks $\lr{B_n}_{n\ge 1}$ of lengths $|B_n|$, and a time horizon $T$, we define $N$ as the smallest integer such that $\sum_{n=1}^N|B_n| \geq T$, and we truncate the last block such that $N$ blocks sum up to $T$.

As $S_T \leq N$, we just need to bound $N$ and the pseudo-regret $R_T$ (without the switching costs) over the $N$ blocks. Let $c_{n, i} = \sum_{s \in B_n} \ell_{s, i}$ be the cumulative loss of playing action $i$ in block $n$. Since $\ell_{t, i} \in [0, 1]$, we have $c_{n, i} \in \big[0, |B_n|\big]$. We use $I_n$ to refer to the action played by the algorithm in block $n$.
Then, for all $t \in  B_n$, we have $J_t = I_n$ and
\[ \text{R}_T = \E[\sum_{n = 1}^{N} c_{n, I_n}] - \min_j\E[ \sum_{n = 1}^{N} c_{n, j}]. \]

\begin{algorithm}[tb]
	\caption{\TsallisB}
	\label{alg:Tsallis_blocks}
\begin{algorithmic}
	\STATE {\bfseries Input: }{Learning rates $\eta_1 \geq \eta_2 \geq \dots > 0$. \\Block lengths $|B_1|, |B_2|, \dots$. }\\
	\STATE {\bfseries Initialize: }{$\bm \tilde C_0 = \bm0_K$}\\
	\FOR{$n = 1, 2, \dots$}
		\STATE ${\displaystyle p_n = \argmin_{p \in \Delta^{K-1}} \left\{ \big\langle p, \tilde C_{n-1} \big\rangle  -  \sum_{i = 1}^K \frac{4\sqrt{p_i} - 2 p_i}{\eta_n} \right\} }$ \\
		\STATE Sample $I_n \sim p_n$ and play it for all rounds $t \in B_n$ \\
		\STATE Observe and suffer $c_{n,I_n} = \sum_{t \in B_n} \ell_{t, I_n}$. \\
		\STATE ${\displaystyle \forall i \in [K]: \tilde c_{n, i}
= \begin{cases} \frac{c_{n, i}}{p_{n, i}}, &\mbox{if } I_n = i, \\
		0, & \mbox{otherwise.} \end{cases} }$ \\
		\STATE$\forall\  i \in [K]: \quad \tilde C_n(i) = \tilde C_{n-1}(i) + \tilde c_{n, i}$.
		\ENDFOR
\end{algorithmic}
\end{algorithm}

\section{The Algorithm}
\label{sec:algo}
Our \TsallisB\ algorithm (see Algorithm~\ref{alg:Tsallis_blocks}) calls Tsallis-INF at the beginning of each block to obtain an action, plays the proposed action in each step of the block, and then feeds back to Tsallis-INF the total loss suffered by the action over the block. As blocks have varying lengths, we need to adapt the analysis of Tsallis-INF to losses of varying range.

\section{Main Results}
We start by considering the case where the switching cost $\lambda$ is a fixed parameter given to the algorithm. Since $\lambda$ is known in advance, it can be used to tune the block lengths.
\begin{theorem}
\label{th_param_switches}
Let $\lambda \geq 0$ be the switching cost. Define blocks with lengths $|B_n| = \max\lrc{\lceil a_n \rceil, 1}$, where $a_n = \frac{3\lambda}{2}\sqrt{\frac{n}{K}}$. The preudo-regret of \TsallisB\ with learning rate $\eta_n = \frac{2}{a_n +1}\sqrt{\frac{2}{n}}$ executed over the blocks in any adversarial environment satisfies:
 \begin{align*} 
 R(T, \lambda) &\leq  5.25 (\lambda K)^{1/3}T^{2/3} + 6.4 \sqrt{KT}\\
 &\qquad+ 3\sqrt{2K}+ 5.25\lambda + 6.25.  
\end{align*}
Furthermore, in any stochastically constrained adversarial regime with a unique best arm $i^*$, the pseudo-regret additionally satisfies:
\begin{align*}
    R&(T, \lambda)
\leq
    \lr{66 (\lambda K)^{2/3}T^{1/3} + 32 \ln T}\sum_{i \neq i^*} \frac{1}{\Delta_i}
\\&+
    \lr{ 160\lambda^{2/3}T^{1/3}K^{1/6}  + 160\lambda + 49 \lambda^2 + 32} \sum_{i \neq i^*} \frac{1}{\Delta_i}
\\&+
    \frac{544\lambda}{\sqrt{K}} + \lambda + 66.
\end{align*}
\end{theorem}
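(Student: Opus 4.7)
The plan is to use the decomposition $R(T,\lambda)=R_T+\lambda S_T$ together with $S_T\le N$, and bound the two pieces separately. Since $|B_n|\ge a_n=\frac{3\lambda}{2}\sqrt{n/K}$, summing the block lengths gives
\[
T\le \sum_{n=1}^N|B_n|\le \frac{3\lambda}{2\sqrt{K}}\cdot\frac{2}{3}N^{3/2}+(\text{small-block correction}),
\]
where the correction accounts for the regime $a_n<1$ (i.e.\ $n<4K/(9\lambda^2)$) in which the block is rounded up to length $1$. Inverting this inequality yields $N\le (KT^2/\lambda^2)^{1/3}+O(K/\lambda^2)$, so $\lambda S_T\le\lambda N$ already produces the $(\lambda K)^{1/3}T^{2/3}$ summand in both parts of the theorem, with the residual $O(\lambda K/\lambda^2)=O(K/\lambda)$ absorbed in the lower-order additive constants.

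\textbf{Block-level FTRL with varying loss range.} Next I would bound the pure bandit regret $R_T=\mathbb{E}[\sum_n c_{n,I_n}]-\min_j\mathbb{E}[\sum_n c_{n,j}]$ by adapting the FTRL analysis of \citet{ZS21} to losses $c_{n,i}\in[0,|B_n|]$ of growing range. Carrying the local-norm / Bregman-divergence derivation through with the Tsallis-$\tfrac{1}{2}$ regularizer $-\sum_i(4\sqrt{p_i}-2p_i)/\eta_n$ and the importance-weighted estimates $\tilde c_{n,i}$, one obtains
\[
R_T\le \sum_{n=1}^N \eta_n |B_n|^2\,\mathbb{E}\!\left[\sum_{i=1}^K\sqrt{p_{n,i}}\right]+\frac{2\sqrt{K}-2}{\eta_N}+(\text{small-block terms}),
\]
the factor $|B_n|^2$ taking the place of the unit-range $1$. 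The learning rate $\eta_n=\frac{2}{a_n+1}\sqrt{2/n}$ is tuned so that, using $|B_n|\le a_n+1$, each stability summand is $O(\lambda)$ (bounding $\sum_i\sqrt{p_{n,i}}\le\sqrt K$) while the telescoping penalty is $O(\lambda N+\sqrt{KN})$. Summing, and using $\lambda N=O((\lambda K)^{1/3}T^{2/3})$ together with $\sqrt{KN}\le\sqrt{KT}$, yields the adversarial bound; the $\sqrt{KT}$ contribution originates from the initial $a_n<1$ blocks, which behave exactly like a standard Tsallis-INF instance.

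\textbf{Stochastically constrained case and main obstacle.} For the distribution-dependent bound I would apply the self-bounding technique. Since gaps are time-invariant within each block, $R_T=\sum_n|B_n|\sum_i\mathbb{E}[p_{n,i}]\Delta_i$, so $\sum_n|B_n|\,\mathbb{E}[p_{n,i}]\le R_T/\Delta_i$ for every suboptimal $i$. Applying Jensen ($\mathbb{E}[\sqrt{p_{n,i}}]\le\sqrt{\mathbb{E}[p_{n,i}]}$) and Cauchy--Schwarz block-by-block to the stability term of the previous step converts the FTRL bound into an inequality of the form $R_T\le\sqrt{R_T\cdot A}+B$, where $A=\bigl(\sum_{i\ne i^*}1/\Delta_i\bigr)\cdot\sum_n\eta_n^2|B_n|^3$ and $B$ collects the penalty and the residuals from the small-block regime. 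Splitting $\sum_n\eta_n^2|B_n|^3$ according to whether $a_n<1$ gives an $O(\ln N)=O(\ln T)$ contribution from the unit-block regime and an $O((\lambda K)^{2/3}T^{1/3})$ contribution from the growing-block regime; solving the quadratic inequality in $\sqrt{R_T}$ then produces the stated bound. The main obstacle will be propagating the $|B_n|$ weights cleanly through the self-bounding step so that the two regimes combine without losing a stray $\sqrt{K}$ or $T^{1/3}$ factor, and carefully handling the transition between unit-length and growing-length blocks, which is where the auxiliary terms such as $160\lambda^{2/3}T^{1/3}K^{1/6}/\Delta_i$ and $49\lambda^2/\Delta_i$ arise.
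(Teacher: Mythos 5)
Your adversarial argument follows the paper's route: bound $N$ by integrating $a_n$, control the stability term with the $|B_n|^2$ scaling and the penalty by $\scO(\sqrt K/\eta_N)$, and add $\lambda S_T\le\lambda N$. (One slip there: to upper bound $N$ you need the chain $\frac{\lambda}{\sqrt K}(N-1)^{3/2}\le\sum_{n=1}^{N-1}|B_n|<T$, i.e.\ a \emph{lower} bound on the partial sums combined with minimality of $N$; your displayed chain $T\le\sum_n|B_n|\le\frac{\lambda}{\sqrt K}N^{3/2}+\cdots$ only yields a lower bound on $N$. The conclusion is right, the derivation as written is backwards.)

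The genuine gap is in the stochastically constrained part, and it is the treatment of the switching cost. You write that $\lambda S_T\le\lambda N$ ``already produces the $(\lambda K)^{1/3}T^{2/3}$ summand in both parts of the theorem,'' but the stochastic bound contains no $T^{2/3}$ term --- its leading term is $(\lambda K)^{2/3}T^{1/3}\sum_{i\ne i^*}\Delta_i^{-1}$. If you charge the switches crudely as $\lambda N=\Theta\big((\lambda K)^{1/3}T^{2/3}\big)$, that single term already exceeds the bound you are trying to prove, and your subsequent self-bounding analysis of $R_T$ alone cannot repair it. The paper's key extra ingredient (its Lemma~\ref{lem_bound_switches}) is the refined bound $\Pr(I_{n-1}\neq I_n)\le\sum_{i\neq i^*}\big(\Pr(I_{n-1}=i)+\Pr(I_n=i)\big)$, giving $\lambda S_T\le\lambda+2\lambda\sum_n\sum_{i\neq i^*}\E[p_{n,i}]$. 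This makes the switching cost itself proportional to the probability mass on suboptimal arms, so it can be folded into the self-bounding optimization: each per-$(n,i)$ term becomes $\big(14\lambda+\tfrac{8\sqrt2}{\sqrt n}\big)\sqrt{\E[p_{n,i}]}-\Delta_i|B_n|\E[p_{n,i}]\le\big(14\lambda+\tfrac{8\sqrt2}{\sqrt n}\big)^2/(4\Delta_i|B_n|)$, and the $14\lambda$ (which includes the $2\lambda$ from switching) is what generates the $(\lambda K)^{2/3}T^{1/3}$ and $\lambda^2$ terms. Your Cauchy--Schwarz variant of self-bounding ($R\le\sqrt{RA}+B$ instead of the paper's $R\le 2U-L$ with pointwise maximization of $a\sqrt x-bx$) is a legitimate alternative for the pure regret $R_T$, but without the refined switching lemma the overall statement does not follow. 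You would also need to make explicit the condition $\eta_n|B_n|\le\frac14$ for $n\ge N_0$ required by the refined stability lemma, which is where several of the additive constants originate.
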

A proof is provided in Section~\ref{sec:proofs}. For $\lambda=0$ (no switching costs) both regret bounds match within constants the corresponding bounds of Tsallis-INF for multiarmed bandits with no switching costs. Furthermore, in the adversarial regime the algorithm achieves the optimal regret rate for all values of $\lambda$. In the stochastically constrained adversarial regime, for $\lambda > 0$ the regret grows as $T^{1/3}$ rather than logarithmically in $T$. This is also the case for the stochastic regime, which is a special case. While the algorithm does not achieve the logarithmic regret rate in the stochastic regime, as do the algorithms of 
\citet{NIPS2019_8341} and \citet{esfandiari2019regret}, it still exploits the simplicity of the regime and reduces the regret rate from $T^{2/3}$ to $T^{1/3}$. Additionally, in contrast to the work of \citet{NIPS2019_8341} and \citet{esfandiari2019regret}, the stochastic regret guarantee holds simultaneously with the adversarial regret guarantee, and the algorithm requires no knowledge of the time horizon. We also note that we are unaware of specialized lower bounds for the more general stochastically constrained adversarial regime with switching costs, and it is unknown whether the corresponding regret guarantee is minimax optimal. 
Theorem~\ref{th_param_switches} is based on the following generalized analysis of the Tsallis-INF algorithm to accommodate losses in varying intervals. The result may be of independent interest. 
\begin{theorem}
\label{th_wo_switching}
Consider a multi-armed bandit problem where the loss vector at round $t$ belongs to $[0, b_t]^K$ and $b_t$ is revealed to the algorithm before round $t$.
Then the pseudo-regret of \TsallisB\ in any adversarial environment for any positive and non-decreasing sequence of learning rates $\lr{\eta_t}_{t\ge 1}$ satisfies
\begin{equation}
\label{eq:wo-adv}
    R_T
\le
    \sqrt{K}\left(\sum_{t = 1}^T \frac{\eta_t}{2}b_t^2 + \frac{4}{\eta_T}\right) + 1.
\end{equation}
Furthermore, in the stochastically constrained adversarial regime with a unique best arm $i^*$, the pseudo regret also satisfies 
\begin{equation}
\label{eq:wo-stoch}
    R_T
\leq \sum_{t = 1}^{T} \sum_{i \neq i^*} \frac{\lr{\frac{7}{2}\eta_tb_t^2 + 2c\lr{\eta_t^{-1} - \eta_{t-1}^{-1}}}^2}{4\Delta_i b_t} + \sum_{t = 1}^{T_0}  \eta_t b_t^2  + 2,
\end{equation} 
where  $c = \begin{cases} 2 \ \mbox{ if } \frac{5\eta_t}{4}b_t^2 \geq 2 \lr{\eta_t^{-1} - \eta_{t-1}^{-1}},\\4 \ \mbox{otherwise}  \end{cases}$.
\end{theorem}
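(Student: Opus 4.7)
My plan is to view \TsallisB\ as Online Mirror Descent (OMD) on the simplex with time-varying Tsallis-entropy regularizers $\Psi_t(p) = -\eta_t^{-1}\sum_i (4\sqrt{p_i} - 2p_i)$ and the unbiased importance-weighted estimators $\tilde{\ell}_t$ of Algorithm~\ref{alg:Tsallis_blocks}. The standard OMD identity bounds $\sum_t \langle p_t - e_{i^*}, \tilde{\ell}_t\rangle$ by a sum of per-round stability terms $\mathrm{stab}_t$ plus a telescoping penalty, which under a non-increasing learning-rate schedule collapses to $\eta_T^{-1}(\Psi(e_{i^*}) - \Psi(p_1))$ together with additive corrections proportional to $\eta_t^{-1} - \eta_{t-1}^{-1}$. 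For a uniform initial $p_1$ and a vertex comparator the regularizer difference evaluates to a constant multiple of $\sqrt{K}$, producing the $4\sqrt{K}/\eta_T$ term in~\eqref{eq:wo-adv}. The only real difference with respect to the analysis of \citet{ZS21} is that the estimators $\tilde{\ell}_{t,i}$ now have magnitude $b_t/p_{t,i}$ rather than $1/p_{t,i}$, so the work reduces to re-deriving $\mathrm{stab}_t$ when losses live in $[0,b_t]$.

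For~\eqref{eq:wo-adv}, I would redo the local-norm computation of Tsallis-INF with the $1/2$-Tsallis regularizer: after taking expectation over the sampled arm, $\mathrm{stab}_t$ reduces to $\frac{\eta_t}{2}\sum_i \ell_{t,i}^2 \sqrt{p_{t,i}}$. Upper bounding $\ell_{t,i}^2 \leq b_t^2$ and using $\sum_i \sqrt{p_{t,i}} \leq \sqrt{K}$ (Cauchy--Schwarz) gives $\frac{\eta_t b_t^2 \sqrt{K}}{2}$ per round; summing and adding the penalty collapse yields~\eqref{eq:wo-adv}, up to an additive constant absorbing the $t=1$ boundary where $\eta_{t-1}^{-1}$ is undefined. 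The subtle point is that the second-order Taylor expansion underlying the local-norm inequality must actually be valid: because $\eta_t \tilde{\ell}_{t,i}$ can scale with $b_t$, this already forces a round-by-round case split between a regime where the stability increment $\frac{5}{4}\eta_t b_t^2$ dominates the learning-rate drop $2(\eta_t^{-1} - \eta_{t-1}^{-1})$ (giving the tight constant $c=2$) and the opposite regime, where a cruder bound with $c=4$ is needed.

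For~\eqref{eq:wo-stoch}, I would layer the self-bounding argument on top of the adversarial per-round bound. The inputs are the identity $R_T = \sum_t \sum_{i \neq i^*}\Delta_i \, \E[p_{t,i}]$ (valid in this regime with a unique best arm) together with the split $\sum_i \sqrt{p_{t,i}} \leq 1 + \sum_{i \neq i^*}\sqrt{p_{t,i}}$. Folding the entire per-round stability-plus-penalty contribution into a single coefficient $C_t = \frac{7}{2}\eta_t b_t^2 + 2c(\eta_t^{-1} - \eta_{t-1}^{-1})$ and applying the weighted AM--GM inequality $2\sqrt{ab} \leq xa + b/x$ with $a = p_{t,i}\Delta_i$, $b$ proportional to $C_t^2/(\Delta_i b_t)$, and $x$ tuned so that the $p_{t,i}\Delta_i$ coefficient equals $1/2$, absorbs $R_T/2$ into the left-hand side; solving for $R_T$ yields the $\sum_t \sum_{i\neq i^*} C_t^2/(4\Delta_i b_t)$ contribution. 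The $b_t$ in the denominator is exactly the scale chosen for the AM--GM calibration. The early rounds where this calibration would be less favourable than the raw adversarial bound get collected into the $\sum_{t=1}^{T_0} \eta_t b_t^2$ burn-in term.

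The main obstacle I expect is the clean management of the case split that defines $c$, and the packaging of the two cases into the single expression for $C_t$ that drives~\eqref{eq:wo-stoch}. A secondary difficulty is handling the $t=1$ boundary (where $\eta_{t-1}^{-1}$ is undefined) and choosing $T_0$ so that the sum $\sum_{t=1}^{T_0} \eta_t b_t^2$ covers precisely those initial rounds where the self-bounding step is counterproductive. Everything else is a faithful adaptation of the Tsallis-INF stability and telescoping calculations of \citet{ZS21}, essentially carried out after the substitution $\ell_{t,i} \leftarrow \ell_{t,i}/b_t$ together with a corresponding rescaling of the regularizer.
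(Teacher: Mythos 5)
Your proposal is correct and follows essentially the same route as the paper: a stability--penalty decomposition of the FTRL regret, a rescaled local-norm stability bound $\frac{\eta_t}{2}b_t^2\sum_i\sqrt{\E[p_{t,i}]}$ together with the telescoping penalty $4\sqrt K/\eta_T+1$ for \eqref{eq:wo-adv}, and then the refined best-arm-excluded stability and penalty bounds combined with the self-bounding trick and the per-term optimization $a\sqrt x-bx\le a^2/(4b)$ for \eqref{eq:wo-stoch}. The one misattribution is the origin of the case split defining $c$: the validity condition for the second-order stability bound is $\eta_t b_t\le\frac{1}{4}$, and it is that condition which produces $T_0$ and the burn-in sum $\sum_{t\le T_0}\eta_t b_t^2$, whereas $c\in\{2,4\}$ is decided purely by the sign of the combined coefficient $\frac{5}{4}\eta_t b_t^2-2\lr{\eta_t^{-1}-\eta_{t-1}^{-1}}$ multiplying the $\E[p_{t,i}]$ terms after summing stability and penalty, which determines whether those terms can be folded into the $\sqrt{\E[p_{t,i}]}$ terms (retaining the $-2$, hence $c=2$) or must be discarded (leaving the full $+4$ from the penalty, hence $c=4$).
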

In particular, if $b_t = B$ for all rounds $t$, we have the following more interpretable result.
\begin{corollary} 
\label{coro_fixed_B}
Consider a multi-armed bandit problem with loss vectors belonging to $[0,B]^K$.
Then the pseudo-regret of Tsallis-INF with $\eta_t = \frac{2}{B\sqrt{t}}$ satisfies
$R_T \leq 4B\sqrt{KT} + 1$ in any adversarial regime.
Furthermore, in the stochastically constrained adversarial regime with a unique best arm $i^*$, the pseudo regret additionally satisfies
\begin{align*}
    R_T \leq &   21 B(\ln T +1)\sum_{i \neq i^*} \frac{1}{\Delta_i} + 8\sqrt{B}  + 2.
\end{align*}
\end{corollary}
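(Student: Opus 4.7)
My plan is to substitute $b_t = B$ and $\eta_t = \frac{2}{B\sqrt{t}}$ directly into the two bounds of Theorem~\ref{th_wo_switching}. With these choices, $\frac{\eta_t}{2}b_t^2 = \frac{B}{\sqrt{t}}$ and $\frac{4}{\eta_T} = 2B\sqrt{T}$. For the adversarial bound, using the standard estimate $\sum_{t=1}^T \frac{1}{\sqrt{t}} \leq 2\sqrt{T}$ yields
\begin{equation*}
    R_T \leq \sqrt{K}\bigl(2B\sqrt{T} + 2B\sqrt{T}\bigr) + 1 = 4B\sqrt{KT} + 1,
\end{equation*}
which is the first claim.

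For the stochastic bound I first verify the case split on $c$. With the convention $\eta_0^{-1} = 0$, the identity $\eta_t^{-1} - \eta_{t-1}^{-1} = \frac{B}{2(\sqrt{t}+\sqrt{t-1})}$ lets me rewrite the condition $\frac{5\eta_t}{4}b_t^2 \ge 2(\eta_t^{-1}-\eta_{t-1}^{-1})$ as $5(\sqrt{t}+\sqrt{t-1}) \ge 2\sqrt{t}$, which holds trivially for every $t\ge 1$. Hence $c=2$ throughout, and the numerator of the stochastic bound becomes
\begin{equation*}
    \frac{7B}{\sqrt{t}} + \frac{2B}{\sqrt{t}+\sqrt{t-1}},
\end{equation*}
which I would uniformly upper bound by $\frac{C\,B}{\sqrt{t}}$ for an absolute constant $C$ small enough that $C^2/4 \le 21$, treating $t=1$ and $t\ge 2$ separately (the ratio $\sqrt{t-1}/\sqrt{t}$ is smallest at $t=1$ and increases thereafter, so the bound is pinned by the boundary round).

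Squaring and dividing by $4\Delta_i b_t = 4\Delta_i B$ produces a term of size at most $\frac{C^2 B}{4\Delta_i t}$, and the estimate $\sum_{t=1}^T \frac{1}{t} \leq \ln T + 1$ gives the leading contribution $\frac{21\,B(\ln T+1)}{\Delta_i}$ after summing over $t$ and then over $i\neq i^*$. The remaining $\sum_{t=1}^{T_0}\eta_t b_t^2 + 2$ term is handled by the same bookkeeping as in the standard Tsallis-INF analysis, producing the additive $8\sqrt{B}+2$ correction. The main obstacle, though largely mechanical, is tracking the constants through the case split on $c$ and through the boundary round $t=1$, where $\eta_t^{-1}-\eta_{t-1}^{-1}$ is largest and where the extra ``$+1$'' inside $\ln T+1$ must absorb the contribution of that single term.
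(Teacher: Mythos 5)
Your proposal follows the paper's own proof essentially step for step: plug $b_t=B$, $\eta_t=2/(B\sqrt t)$ into Theorem~\ref{th_wo_switching}, get $4B\sqrt{KT}+1$ in the adversarial case, verify that $\frac{5\eta_t}{4}B^2\ge 2(\eta_t^{-1}-\eta_{t-1}^{-1})$ so that $c=2$, bound the numerator by $\frac{7B}{\sqrt t}+\frac{2B}{\sqrt t}=\frac{9B}{\sqrt t}$ (the paper uses $\sqrt t-\sqrt{t-1}\le 1/\sqrt t$ rather than your $(\sqrt t+\sqrt{t-1})^{-1}$, but it is the same estimate and no case split on $t=1$ is actually needed), and sum $\frac{81B}{4\Delta_i t}$ to get the $21B(\ln T+1)$ term. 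The one place you defer --- ``the remaining $\sum_{t=1}^{T_0}\eta_t b_t^2+2$ is handled by the same bookkeeping, producing $8\sqrt B+2$'' --- is precisely the step that does not check out: the condition $\eta_tB\le\frac14$ forces $T_0=64$, and then $\sum_{t=1}^{64}\eta_tB^2=\sum_{t=1}^{64}\frac{2B}{\sqrt t}\le 32B$, which scales as $B$ and not as $\sqrt B$; the paper itself writes this sum as $\sqrt{BT_0}=8\sqrt B$, which appears to be an error (for $B\ge 1/16$ the true bound $32B$ exceeds $8\sqrt B$). So the leading terms of your argument are correct and identical to the paper's, but the additive $8\sqrt B$ cannot be obtained by the bookkeeping you invoke --- the honest constant there is $32B$.
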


\subsection{Varying Switching Cost}
Now we consider a setting where
the switching cost may change after each switch. The learner is given the $n$-th switching cost $\lambda_n$ at the beginning of block $B_n$, and we allow the length of the block $|B_n|$ to depend on it.
In this setting, the cumulative switching cost becomes 
\[S\lr{T, \lr{\lambda_n}_{n\ge 1}} = \sum_{n = 1}^{N} \lambda_n \Pr(I_n \neq I_{n-1}), \]
where, as before, $N$ is the smallest number of blocks to cover $T$ rounds.
 We construct blocks such that the contribution of the terms $R_T$ and $S\lr{T, \lr{\lambda_n}_{n\ge 1}}$ remains balanced.
\begin{theorem}
\label{th_block_lambda_n}
Let $(\lambda_n)_{n\ge 1}$ be a sequence of non-negative switching costs. 
The pseudo-regret with switching costs of \TsallisB\ run with block lengths $|B_n| = \max\big\{\big\lceil \sqrt{{\lambda_n a_n}/{K}}\,\big\rceil , 1\big\}$ and $\eta_n = \frac{2\sqrt{2K}}{3a_n}$, where $a_n = \big(\sum_{s = 1}^n \lambda_s + \sqrt{{K}/{s}}\big)$, satisfies:
\begin{equation}
\label{eq:var-adv}
    R(T, \lambda)\leq \sum_{n =1}^{N} 7 \lambda_n + 12 \sqrt{KN} + 2 ,
\end{equation}
where $N$ is the smallest integer such that $\sum_{n = 1}^{N} |B_n| \geq T$.
Furthermore, in the stochastically constrained adversarial regime with a unique best arm $i^*$, the pseudo regret additionally satisfies 
\begin{align*}
R\lr{T, \lr{\lambda_n}_{n\ge 1}} 
   & \leq  \sum_{n = 1}^N\sum_{i \neq i^*}  \frac{ \lr{11 \lambda_n  + \lambda_{n + 1} + \frac{10 \sqrt{2}}{\sqrt{n}}}^2}{4\Delta_i |B_n|} \\
   &~~ + \sum_{n = 1}^{N_0} \lr{\frac{2\sqrt{2}\lambda_n}{\sqrt K}}
+ 4\sqrt{2N_0} + \lambda_1 + 2,
\end{align*}
where $N_0$ is the smallest $n \leq N$ such that for all $ n \geq N_0$, $\eta_n |B_n| \leq \frac{1}{4}$. If such an integer does not exist, then $N_0 = N$.
\end{theorem}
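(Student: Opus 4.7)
The plan is to apply Theorem~\ref{th_wo_switching} at the block level: each block $n$ behaves as a meta-round whose observed cumulative loss lies in $[0,|B_n|]$, so we set $b_n = |B_n|$ in Theorem~\ref{th_wo_switching} and obtain a bound on the block-level pseudo-regret $R_{T,\mathrm{blk}}$. The full pseudo-regret decomposes as $R\lr{T,(\lambda_n)} \leq R_{T,\mathrm{blk}} + \sum_{n=1}^N \lambda_n \Pr(I_n \neq I_{n-1})$, and since each block triggers at most one switch the latter is $\leq \sum_{n=1}^N \lambda_n$. Everything then reduces to substituting the stated $\eta_n$ and $|B_n|$ into the two bounds of Theorem~\ref{th_wo_switching} and simplifying using the explicit form $a_n = \sum_{s=1}^n \lr{\lambda_s + \sqrt{K/s}}$.

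For the adversarial bound~\eqref{eq:var-adv}, I would use $|B_n|^2 \leq 2\lambda_n a_n/K + 2$ to get $\sqrt{K}\sum_n \tfrac{\eta_n}{2}|B_n|^2 \leq \tfrac{2\sqrt{2}}{3}\sum_n \lambda_n + \tfrac{2\sqrt{2}K}{3}\sum_n 1/a_n$, and control $\sum_n 1/a_n$ via $a_n \geq \sqrt{K}\sum_{s\le n} 1/\sqrt{s} \geq \sqrt{Kn}$. The boundary term of~\eqref{eq:wo-adv} evaluates to $\sqrt{K}\cdot 4/\eta_N = 3\sqrt{2}\,a_N \leq 3\sqrt{2}\sum_n \lambda_n + 6\sqrt{2}\sqrt{KN}$ via $\sum_{s=1}^N 1/\sqrt{s} \leq 2\sqrt{N}$. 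Adding the $\sum_n \lambda_n$ switching contribution and absorbing constants yields the claimed $7\sum_n \lambda_n + 12\sqrt{KN} + 2$.

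For the stochastically constrained bound, I would plug the same choices into~\eqref{eq:wo-stoch}. Computing $\eta_n^{-1} - \eta_{n-1}^{-1} = \tfrac{3(\lambda_n + \sqrt{K/n})}{2\sqrt{2K}}$ and $\tfrac{7}{2}\eta_n|B_n|^2 \leq \tfrac{7\sqrt{2}\,\lambda_n}{3\sqrt{K}} + \text{remainder}$, the numerator $\tfrac{7}{2}\eta_n|B_n|^2 + 2c\lr{\eta_n^{-1} - \eta_{n-1}^{-1}}$ consolidates, after using $|B_n| \geq \sqrt{\lambda_n a_n/K}$ in the denominator, into the form $\bigl(11\lambda_n + \lambda_{n+1} + 10\sqrt{2}/\sqrt{n}\bigr)^2/(4\Delta_i|B_n|)$; the $\lambda_{n+1}$ contribution emerges when a block straddles the $c$-dichotomy and one compares $\eta_n^{-1}-\eta_{n-1}^{-1}$ to the analogous increment at block $n+1$. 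The residual $\sum_{n\le N_0} \eta_n|B_n|^2$ from~\eqref{eq:wo-stoch} matches $\sum_{n\le N_0} 2\sqrt{2}\,\lambda_n/\sqrt{K}$ under the same substitution, while $4\sqrt{2N_0}$ arises from bounding the $1/a_n$-type tails via $\sum_{n\le N_0} 1/\sqrt{n} \leq 2\sqrt{N_0}$; the $\lambda_1 + 2$ absorbs the initial boundary constant of Theorem~\ref{th_wo_switching}.

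The main obstacle is aligning the $c$-dichotomy of~\eqref{eq:wo-stoch} with the simpler threshold $\eta_n|B_n| \leq \tfrac{1}{4}$ used to define $N_0$. One must show that $\eta_n|B_n| \leq \tfrac{1}{4}$ for all $n \geq N_0$ implies the Theorem~\ref{th_wo_switching} condition $\tfrac{5\eta_n}{4}|B_n|^2 \geq 2\lr{\eta_n^{-1} - \eta_{n-1}^{-1}}$, which requires careful algebra using that $a_n$ behaves like $\max\{\sqrt{Kn},\sum_{s\le n}\lambda_s\}$. Handling heterogeneous switching costs — where some $\lambda_n$ may vanish while others are large — while still producing a clean per-block summand featuring both $\lambda_n$ and $\lambda_{n+1}$ will be the most delicate step; the rest is direct algebra and bookkeeping to match the final constants.
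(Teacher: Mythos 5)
Your treatment of the adversarial bound \eqref{eq:var-adv} is essentially the paper's argument: apply the adversarial part of Theorem~\ref{th_wo_switching} at block level with $b_n=|B_n|$, bound $\tfrac{\eta_n}{2}|B_n|^2$ by roughly $\sqrt{2}\bigl(\lambda_n/\sqrt{K}+1/\sqrt{n}\bigr)$, bound $4\sqrt{K}/\eta_N = 3\sqrt{2}\,a_N$, and add the crude switching bound $\sum_n\lambda_n$. That part is fine.

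The stochastic part has a genuine gap. First, your global decomposition bounds the cumulative switching cost by $\sum_{n=1}^N\lambda_n$; that term is of order $N$ (e.g.\ $\lambda N$ for constant costs) and does not appear in the claimed bound, so it cannot simply be added on top --- the refined bound only holds because the switching cost is controlled through $\Pr(I_n=i)$ for $i\neq i^*$ and folded \emph{into} the self-bounding optimization. Second, your explanation of where $\lambda_{n+1}$ comes from (``a block straddling the $c$-dichotomy'') is not correct: the $c$-dichotomy in \eqref{eq:wo-stoch} involves only the learning rates and block lengths, not the switching costs. The $\lambda_{n+1}$ arises from Lemma~\ref{lem_bound_switches}: writing $\Pr(I_{n-1}\neq I_n)\le\sum_{i\neq i^*}\bigl(\Pr(I_{n-1}=i)+\Pr(I_n=i)\bigr)$ and reindexing gives
$S\bigl(T,(\lambda_n)\bigr)\le\lambda_1+\sum_{n=1}^N(\lambda_n+\lambda_{n+1})\sum_{i\neq i^*}\Pr(I_n=i)$,
and this is added to the stability--penalty upper bound \emph{before} maximizing each summand of the form $a\sqrt{x}-\Delta_i|B_n|x$; that is how $11\lambda_n+\lambda_{n+1}+10\sqrt{2}/\sqrt{n}$ appears in the numerator. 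Relatedly, the paper does not invoke \eqref{eq:wo-stoch} verbatim here but reruns Lemmas~\ref{lem_stab} and~\ref{lem_pen} at block level together with the self-bounding step, which also dissolves your worry about matching the $c$-dichotomy to the threshold $\eta_n|B_n|\le\tfrac14$ --- that threshold is exactly the hypothesis of the second part of Lemma~\ref{lem_stab} and is what defines $N_0$. Without Lemma~\ref{lem_bound_switches} and the joint optimization, your route does not produce the stated inequality.
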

A proof is provided in Appendix~\ref{appen_varying_switching_cost}.
Note that for $\lambda_n = \lambda$, bound \eqref{eq:var-adv} for the adversarial setting is of the same order as the corresponding bound in Theorem~\ref{th_param_switches}.

If $\lambda_n$ is not monotone, then controlling the first term in the above regret bound is challenging, because the block length $|B_n|$ in the denominator does not depend on $\lambda_{n+1}$ in the numerator. Below, we provide a specialization of the regret bound
assuming that the switching costs increase as $\lambda_n = n^\alpha$ for some $\alpha > 0$. Proof is provided in Appendix~\ref{appen_varying_switching_cost}.
%
\begin{corollary}
\label{cor_n_alpha}
Assume that for $n \geq 1$, $\lambda_n = n^\alpha$ for some $\alpha > 0$. Then the regret bound for the stochastically constrained adversarial regime with a unique best arm $i^*$ in Theorem~\ref{th_block_lambda_n} satisfies
\begin{multline*}
    R\lr{T, \lr{\lambda_n}_{n\ge 1}}\\
   \leq 
   \mathcal{O} \lr{ \sum_{i \neq i^*} \frac{  K^{\frac{2 \alpha +2}{2\alpha + 3}}T^{\frac{2 \alpha +1}{2\alpha + 3}} + K^{\frac{2 \alpha }{2\alpha + 3}}T^{\frac{4 \alpha }{2\alpha + 3}}}{\Delta_i}}.
\end{multline*}
\end{corollary}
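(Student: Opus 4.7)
The plan is to substitute $\lambda_n = n^\alpha$ into the bound of Theorem~\ref{th_block_lambda_n} and evaluate each piece using standard integral estimates. First, I would approximate $a_n = \sum_{s=1}^n (s^\alpha + \sqrt{K/s}) = \Theta\lr{n^{\alpha+1}} + \Theta\lr{\sqrt{Kn}}$, identifying a threshold $n^{\ast} \asymp K^{1/(2\alpha+1)}$ where the two summands balance. For $n \leq n^{\ast}$, the block length is capped at $|B_n|=1$ and $a_n \asymp \sqrt{Kn}$; for $n > n^{\ast}$, the polynomial part dominates, giving $a_n \asymp n^{\alpha+1}$, $|B_n| \asymp n^{\alpha+1/2}/\sqrt{K}$, and $\eta_n \asymp \sqrt{K}/n^{\alpha+1}$.

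Next, I would determine $N$ and $N_0$. From $\sum_{n=1}^N |B_n| \geq T$ and the telescoping estimate $\sum_{n^{\ast} < n \leq N} |B_n| \asymp N^{\alpha+3/2}/\sqrt{K}$, one obtains $N \asymp K^{1/(2\alpha+3)} T^{2/(2\alpha+3)}$ in the main regime. A direct computation gives $\eta_n |B_n| \leq \tfrac{2\sqrt{2}}{3}\sqrt{\lambda_n/a_n} + \scO(\sqrt{K}/a_n) \asymp 1/\sqrt{n}$ for large $n$, whence $N_0 = \scO_\alpha(1)$. Consequently, the terms $\sum_{n=1}^{N_0}\lambda_n/\sqrt{K}$, $4\sqrt{2N_0}$, and $\lambda_1$ from Theorem~\ref{th_block_lambda_n} all collapse into the additive constant and may be discarded.

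The dominant piece is $\sum_{n=1}^N (11\lambda_n + \lambda_{n+1} + 10\sqrt{2}/\sqrt{n})^2/(4\Delta_i|B_n|)$. Using $(a+b+c)^2 \leq 3(a^2+b^2+c^2)$ together with $\lambda_{n+1} \leq 2^\alpha \lambda_n$, this reduces, per arm, to $\scO_\alpha(1)\sum_n \lambda_n^2/|B_n| + \scO(1)\sum_n 1/(n|B_n|)$, and I would split each sum at $n^\ast$. For $n > n^\ast$ the main contribution becomes $\sqrt{K}\sum n^{\alpha-1/2}/\Delta_i \asymp \sqrt{K}\, N^{\alpha+1/2}/\Delta_i$, which evaluates to $K^{(2\alpha+2)/(2\alpha+3)} T^{(2\alpha+1)/(2\alpha+3)}/\Delta_i$, the first asserted term. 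The second asserted term $N^{2\alpha} = K^{2\alpha/(2\alpha+3)} T^{4\alpha/(2\alpha+3)}$ I expect to come from the terminal boundary: the numerator at $n=N$ carries $\lambda_{N+1}^2 \asymp N^{2\alpha}$, and when divided by the possibly truncated effective length of $B_N$ this yields an $\scO(N^{2\alpha})/\Delta_i$ boundary contribution that cannot be folded into the telescoping high-$n$ sum.

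The hard part will be exactly this boundary bookkeeping: making the integral approximations rigorous with $\alpha$-dependent constants absorbed into $\scO$, carefully handling the mismatch between $\lambda_N$ (which sizes $B_N$) and $\lambda_{N+1}$ (the actual switching cost after $B_N$), and checking that the low-$n$ contribution $\sum_{n \leq n^\ast} n^{2\alpha} \asymp K$ is dominated by one of the two asserted terms (verified by comparing $K$ to $K^{(2\alpha+2)/(2\alpha+3)}T^{(2\alpha+1)/(2\alpha+3)}$ at $T \geq n^\ast$). Once these are in place, substituting $N \asymp K^{1/(2\alpha+3)} T^{2/(2\alpha+3)}$ and collecting constants into $\scO$ yields the claimed bound.
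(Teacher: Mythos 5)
Your proposal is correct and follows essentially the same route as the paper: bound $N_0=\scO_\alpha(1)$, bound $N\asymp K^{1/(2\alpha+3)}T^{2/(2\alpha+3)}$ by integrating $|B_n|\gtrsim n^{\alpha+1/2}/\sqrt{(\alpha+1)K}$, extract the first term from $\sqrt{K}\sum_n n^{\alpha-1/2}\asymp\sqrt{K}N^{\alpha+1/2}$, and attribute the $N^{2\alpha}$ term to the truncated final block where $|B_N|$ can only be lower bounded by $1$. The only cosmetic difference is that you split the sum at the threshold $n^*\asymp K^{1/(2\alpha+1)}$, whereas the paper simply applies the (possibly sub-unit) lower bound on $|B_n|$ uniformly for $n<N$; both yield the same orders.
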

When taking the limit $\alpha\to 0$, this bound scales as $\mathcal{O}\big(K^{2/3}T^{1/3} \sum_{i \neq i^*} \frac{1}{\Delta_i}\big)$, which matches the pseudo-regret bound in the stochastically constrained adversarial regime of Theorem \ref{th_param_switches} with $\lambda = 1$.
Note also that the bound remains sublinear in $T$ as long as $\alpha < \frac{3}{2}$. In other words, with a switching cost as high as $\lambda_n = n^{3/2-\varepsilon}$, for any $\varepsilon > 0$, \TsallisB\ has still a sublinear regret.

\section{Proofs}
\label{sec:proofs}
%
We start by introducing some preliminary definitions and results.
Recall that the pseudo-regret can be decomposed into a sum of stability and penalty terms \citep{LS19, ZS21}.
Let $\Phi_n$ be defined as:
\begin{equation*}
\Phi_n(C) = \max_{p \in \Delta^{K-1}} \lrc{\lrscal{p, C} + \sum_i \frac{4 \sqrt{p_i} - 2 p_i}{\eta_n}}.
\end{equation*}
Note that the distribution $p_n$ used by \TsallisB\ to draw action $I_n$ for block $B_n$ satisfies $p_n = \nabla\Phi_n(-\tilde{C}_{n-1})$. We can write:
\begin{equation}
\label{R_stab_pen}
\begin{aligned}
\mathbb{E}&\left[\sum_{n = 1}^{N} c_{n, I_n}\right] - \min_j\mathbb{E}\left[\sum_{n = 1}^{N} c_{n, j}\right]\\
&\leq  \underbrace{\mathbb{E} \left[ \sum_{n = 1}^{N} c_{n, I_n} + \Phi_n (-\tilde C_n ) - \Phi_n (- \tilde C_{n-1})\right]}_{\text{stability}} \\
&\quad + \underbrace{\mathbb{E} \left[ \sum_{n = 1}^{N}\Phi_n (- \tilde C_{n-1}) - \Phi_n (-\tilde C_n ) - c_{n, i^*_N}\right]}_{\text{penalty}},
\end{aligned}
\end{equation}
where $i^*_N$ is any
arm with smallest cumulative loss over the $N$ blocks (i.e., a best arm in hindsight).

We start by introducing bounds on the stability and the penalty parts of the regret. The results generalize the corresponding results of \citet{ZS21} to handle losses that take values in varying ranges and may be larger than 1. The proofs are provided in \Cref{appen:proof_stab_pen}.
Note the multiplicative factor $b_n^2$ in the stability term.
\begin{lemma}
	\label{lem_stab} For any sequence of positive learning rates $\lr{\eta_n}_{n\ge 1}$ and any sequence of bounds $(b_n)_{n\ge 1}$ on the losses at round $n$, the \emph{stability} term of the regret bound of \TsallisB\ satisfies:
	\begin{multline*}
		    \E[ \sum_{n = 1}^N c_{n, I_n} + \Phi_n (-\tilde C_n ) - \Phi_n (- \tilde C_{n-1})] \\
		    \leq \sum_{n = 1}^N \frac{\eta_n}{2}b_n^2 \sum_{i = 1}^K \sqrt{\E[p_{n, i}]}.
		\end{multline*}
	Furthermore, if $ \eta_n b_n \leq \frac{1}{4}$, then for any fixed $j$:
		\begin{multline*}
		\mathbb{E} \left[ c_{n, I_n} + \Phi_n (- \tilde C_{n}) - \Phi_n (-\tilde C_{n-1} )\right]  \\
		\leq   \frac{\eta_n}{2} b_n^2  \sum_{i \neq j} \lr{\sqrt{\mathbb{E} \left[ p_{n, i}\right]} + 2.5 \mathbb{E} \left[ p_{n, i}\right]}.
		\end{multline*} 
		In particular, if there exists $N_0$ such that for all $ n \geq N_0$, $ \eta_n b_n \leq \frac{1}{4}$, then:
        \begin{align*}
        &\E[ \sum_{n = 1}^N c_{n, I_n} + \Phi_n (-\tilde C_n ) - \Phi_n (- \tilde C_{n-1})] \\
        & \leq \sum_{n = 1}^N \frac{\eta_n}{2} b_n^2  \sum_{i \neq j} \lr{\sqrt{\mathbb{E} \left[ p_{n, i}\right]} + 2.5 \mathbb{E} \left[ p_{n, i}\right]} 
        +  \sum_{n = 1}^{N_0} \frac{\eta_n}{2} b_n^2.
        \end{align*}
\end{lemma}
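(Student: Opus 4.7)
The plan is to adapt the per-round stability analysis of FTRL with the Tsallis-$1/2$ regularizer from \citet{ZS21}, keeping track of an explicit $b_n^2$ factor throughout. For each block $n$, conditioning on the history and using convexity of $\Phi_n$ with a second-order Taylor expansion yields
\[
    \Phi_n(-\tilde C_n) - \Phi_n(-\tilde C_{n-1}) \le -\langle p_n, \tilde c_n\rangle + \tfrac{1}{2}\,\tilde c_n^{\top} \nabla^2 \Phi_n(\xi_n)\,\tilde c_n
\]
for some $\xi_n$ on the segment between $-\tilde C_{n-1}$ and $-\tilde C_n$. Because $I_n \sim p_n$, the conditional expectation of $\langle p_n, \tilde c_n\rangle$ equals $\langle p_n, c_n\rangle$, which coincides with the conditional expectation of $c_{n,I_n}$, so the first-order term cancels the loss in expectation and only the Hessian quadratic remains to be controlled.

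Next I would bound that quadratic. Since $\Phi_n$ is the conjugate of the Tsallis-$1/2$ potential, $\nabla^2 \Phi_n$ at a point whose gradient is a distribution $q$ is diagonal with entries of order $\eta_n q_i^{3/2}$. The vector $\tilde c_n$ has a single nonzero coordinate, $\tilde c_{n,I_n} = c_{n,I_n}/p_{n,I_n}$, so the quadratic form reduces to $\eta_n\,q_{I_n}^{3/2}\,c_{n,I_n}^2/p_{n,I_n}^2$. A monotonicity argument, relying on the nonnegativity of $\tilde c_n$ to force $q_{I_n} \le p_{n,I_n}$, together with $c_{n,I_n} \le b_n$ gives $\tfrac{\eta_n}{2}b_n^2\,p_{n,I_n}^{-1/2}$; taking conditional expectation over $I_n \sim p_n$ produces $\tfrac{\eta_n}{2}b_n^2 \sum_i \sqrt{p_{n,i}}$, and Jensen's inequality for the concave square root together with summation over $n$ yields the first claim.

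For the refined bound under $\eta_n b_n \le 1/4$, I would follow the fixed-$j$ argument of \citet{ZS21}. The small-step condition guarantees that the intermediate distribution $q$ stays within a constant multiplicative factor of $p_n$ on every coordinate, which allows the $i = j$ contribution to be converted, via a third-order Taylor correction and the simplex constraint $\sum_i p_{n,i} = 1$, into an additive $2.5\,\E[p_{n,i}]$ for each $i \neq j$. The combined bound then follows by splitting the sum at $N_0$: for $n \ge N_0$ apply the refined per-round bound directly; for $n < N_0$ start from the first bound and rewrite $\sum_i \sqrt{\E[p_{n,i}]} \le \sqrt{\E[p_{n,j}]} + \sum_{i \neq j}\bigl(\sqrt{\E[p_{n,i}]} + 2.5\,\E[p_{n,i}]\bigr)$ and use $\sqrt{\E[p_{n,j}]} \le 1$, which accounts for the extra $\sum_{n=1}^{N_0} \tfrac{\eta_n}{2}b_n^2$ term.

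The main obstacle is the refined bound: transferring the Tsallis-INF Taylor-remainder analysis from unit-range losses to losses in $[0,b_n]$ requires verifying that $\eta_n b_n \le 1/4$ is the correct rescaling of the small-step condition to preserve the constant $2.5$. The importance-weighted estimate $\tilde c_n$ can be as large as $b_n/p_{n,I_n}$ rather than $1/p_{n,I_n}$, so every occurrence of the learning rate paired with a loss in the original proof should effectively be replaced by $\eta_n b_n$. The delicate point is confirming that the interval of validity for $\xi_n$ shrinks by exactly the right amount under this rescaling so that the same combinatorial constants carry through verbatim.
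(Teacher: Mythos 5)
Your first bound and the overall architecture (a Bregman/Taylor expansion of the potential, the inverse Hessian of the Tsallis-$1/2$ regularizer giving entries $\eta_n p_i^{3/2}$, monotonicity of $\nabla\Psi_n^*$ forcing $q_{I_n}\le p_{n,I_n}$, then $c_{n,I_n}\le b_n$ and Jensen) match the paper's proof, and your handling of the third claim by splitting at $N_0$ and paying $\sqrt{\E[p_{n,j}]}\le 1$ on the early blocks is exactly what is needed. One cosmetic issue: $\nabla^2\Phi_n$ of the \emph{constrained} conjugate is not diagonal; the paper avoids this by replacing $\Phi_n$ with the unconstrained conjugate $\Psi_n^*$ (using $\Phi_n\le\Psi_n^*$ together with equality at $\nabla\Psi_n(p_n)$, via the Lagrange-multiplier representation $p_n=\nabla\Psi_n^*(-\tilde C_{n-1}+\nu\bm 1_K)$). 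Your bound survives because the projected Hessian is dominated by the diagonal one in the positive semidefinite order, but you should either say this or pass to $\Psi_n^*$ as the paper does.

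The substantive gap is in the refined fixed-$j$ bound: your account omits the key device and substitutes a mechanism (``a third-order Taylor correction and the simplex constraint'') that is not how the $2.5\,\E[p_{n,i}]$ terms arise and would not produce them. The actual argument shifts the loss estimate by $x=\mathds 1[I_n=j]\,c_{n,j}$ along $\bm 1_K$, which is free because $\Phi_n(C+x\bm 1_K)=\Phi_n(C)+x$ and $\lrscal{p_n,x\bm 1_K}=x$. After the shift, on the event $I_n=j$ the $j$-th coordinate of the shifted estimate is $c_{n,j}(p_{n,j}^{-1}-1)\ge 0$, so monotonicity still applies there and the $j$-term contributes $(1-p_{n,j})^2p_{n,j}^{1/2}\le\sum_{i\neq j}p_{n,i}$, while the coordinates $i\neq j$ become negative but bounded below by $-b_n$; it is exactly here that $\eta_n b_n\le\frac14$ enters, through the auxiliary lemma (the $b_n$-rescaled analogue of Lemma~17 of \citet{ZS21}) showing the perturbed distribution satisfies $\tilde p_i^{3/2}\le 1.5\,p_i^{3/2}$, which contributes $1.5\,p_{n,i}$ per arm $i \neq j$; the constant is $1+1.5=2.5$. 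Without identifying this shift and re-proving the auxiliary lemma with range $b_n$ in place of $1$, the second claim is not established; your closing paragraph correctly flags this as the delicate point but does not resolve it.
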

The penalty term is not affected by the change of the range of the losses.
\begin{lemma}
	\label{lem_pen}
	For any non-increasing positive learning rate sequence $\lr{\eta_n}_{n\ge 1}$, the \emph{penalty} term of the regret bound of \TsallisB\ satisfies:
\begin{align*}
		    \mathbb{E}\!\left[ \sum_{n = 1}^N \Phi_n (- \tilde C_{n-1}) - \Phi_n (-\tilde C_n ) - c_{n, i^*_N}\right]
		\leq \frac{4\sqrt K}{\eta_N} + 1. 
		\end{align*}
		Furthemore, if we define $\eta_0$, such that $\eta_0^{-1} = 0$, then
		\begin{align*}
		&\mathbb{E}\left[ \sum_{n = 1}^N \Phi_n(- \tilde C_{n-1}) - \Phi_n (-\tilde C_n ) - c_{n, i^*_N}\right] \\
		&\leq
		4 \sum_{n = 1}^N (\eta_n^{-1} - \eta_{n-1}^{-1}) \sum_{i \neq i^*_N} \lr{\sqrt{\E[p_{n, i}]} - \frac{1}{2}\E[p_{n, i}]}
		 + 1.
		\end{align*}
\end{lemma}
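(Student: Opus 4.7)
The plan is to adapt the standard FTRL analysis of the penalty to the time-varying regularizer $\psi_n(p) = \sum_i(2p_i - 4\sqrt{p_i})/\eta_n$, whose convex conjugate is $\Phi_n$ so that $p_n = \nabla\Phi_n(-\tilde C_{n-1})$. I first apply the telescoping identity
\begin{align*}
    &\sum_{n = 1}^N\bigl[\Phi_n(-\tilde C_{n-1}) - \Phi_n(-\tilde C_n)\bigr] \\
    &\quad = \Phi_1(0) - \Phi_N(-\tilde C_N) + \sum_{n = 1}^{N-1}\bigl[\Phi_{n+1}(-\tilde C_n) - \Phi_n(-\tilde C_n)\bigr].
\end{align*}
The boundary term is a direct computation: the uniform maximizer $p_1 = (1/K)\mathbf{1}$ yields $\Phi_1(0) = (4\sqrt K - 2)/\eta_1$, which with the convention $\eta_0^{-1} = 0$ folds cleanly into the shift sum so that the latter runs over $n = 1, \ldots, N$.

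Each shift term is bounded by evaluating the second conjugate at the maximizer of the first:
\begin{align*}
    &\Phi_{n+1}(-\tilde C_n) - \Phi_n(-\tilde C_n) \leq \psi_n(p_{n+1}) - \psi_{n+1}(p_{n+1}) \\
    &\quad = (\eta_{n+1}^{-1} - \eta_n^{-1})\sum_i\bigl(4\sqrt{p_{n+1, i}} - 2p_{n+1, i}\bigr).
\end{align*}
The terminal term is lower bounded by evaluating $\Phi_N$ at $p = e_{i^*_N}$, giving $\Phi_N(-\tilde C_N) \geq -\tilde C_N(i^*_N) + 2/\eta_N$. Combining the pieces yields
\begin{align*}
    &\sum_{n = 1}^N\bigl[\Phi_n(-\tilde C_{n-1}) - \Phi_n(-\tilde C_n) - c_{n, i^*_N}\bigr] \\
    &\quad \leq \sum_{n = 1}^N(\eta_n^{-1} - \eta_{n-1}^{-1})\sum_i\bigl(4\sqrt{p_{n,i}} - 2p_{n,i}\bigr) \\
    &\qquad - \frac{2}{\eta_N} + \tilde C_N(i^*_N) - \sum_{n=1}^N c_{n, i^*_N}.
\end{align*}

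For the first (crude) bound, Cauchy--Schwarz gives $\sum_i(4\sqrt{p_{n,i}} - 2p_{n,i}) \leq 4\sqrt K - 2$; the coefficients telescope to $\eta_N^{-1}$, so the first sum is at most $(4\sqrt K - 2)/\eta_N$, which together with the $-2/\eta_N$ term is at most $4\sqrt K/\eta_N$. For the refined bound, I split the inner sum at $i^*_N$: since $4\sqrt x - 2x \leq 2$ on $[0,1]$ and the coefficients telescope, the $i = i^*_N$ contribution is at most $\sum_n(\eta_n^{-1} - \eta_{n-1}^{-1}) \cdot 2 = 2/\eta_N$, exactly cancelling the $-2/\eta_N$ from the lower bound on $\Phi_N$. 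Taking expectation and applying Jensen's inequality $\E[\sqrt{p_{n,i}}] \leq \sqrt{\E[p_{n,i}]}$ to the remaining $i \neq i^*_N$ terms produces the stated refined form.

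The main obstacle is controlling $\tilde C_N(i^*_N) - \sum_n c_{n, i^*_N}$ in expectation when $i^*_N$ depends on the algorithm's randomness or on the realized losses: the unbiasedness $\E[\tilde C_N(i)] = \sum_n c_{n, i}$ holds only for each fixed arm $i$, but the minimizing index need not respect this. This is the source of the additive ``$+1$'' appearing in both bounds, and is handled by the standard decoupling argument used in the Tsallis-INF analysis.
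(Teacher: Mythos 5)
Your proof is correct and follows essentially the same route as the paper, whose own ``proof'' is just a citation of the standard FTRL penalty telescoping (Lemma 12 of Zimmert and Seldin, 2021, with $\alpha=\tfrac12$): the boundary term $\Phi_1(0)=(4\sqrt K-2)/\eta_1$, the regularizer-shift bound via the maximizer of $\Phi_{n+1}$, the terminal lower bound $\Phi_N(-\tilde C_N)\ge -\tilde C_N(i^*_N)+2/\eta_N$, and the split of $4\sqrt x-2x\le 2$ at $i=i^*_N$ are exactly the ingredients of that argument. The one loose end you flag --- controlling $\E[\tilde C_N(i^*_N)] - \sum_n c_{n,i^*_N}$ --- is in fact a non-issue in the pseudo-regret decomposition used here, since $i^*_N$ is the deterministic minimizer of the \emph{expected} cumulative loss and the importance-weighted estimator is unbiased for any fixed arm, so this term vanishes and your argument even dispenses with the additive $+1$.
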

%
%
We also present a bound for the cumulative switching cost, which is the key to obtain refined guarantees in the stochastically constrained adversarial regime.
\begin{lemma}
\label{lem_bound_switches}
Consider a sequence of switching costs $\lr{\lambda_n}_{n\ge 1}$. Then for any fixed $j$, the cumulative switching cost satisfies
\[ S\lr{T, \lr{\lambda_n}_{n\ge 1}} \leq \lambda_1 + \sum_{n = 1}^N \lr{\lambda_n + \lambda_{n+1}} \sum_{i \neq j} \Pr(I_{n} = i) . \]
\end{lemma}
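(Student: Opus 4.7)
The plan is to start from the definition
\[
S\!\lr{T, \lr{\lambda_n}_{n\ge 1}} \;=\; \sum_{n = 1}^{N} \lambda_n \Pr(I_n \neq I_{n-1}),
\]
peel off the $n=1$ term using the convention $I_0 = 0$ (so $\Pr(I_1 \neq I_0) = 1$, which contributes exactly $\lambda_1$), and then bound the remaining sum by comparing each consecutive pair $(I_{n-1}, I_n)$ to the fixed reference arm $j$.

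The key inequality is the elementary union-bound-type observation
\[
\Pr(I_n \neq I_{n-1}) \;\leq\; \Pr(I_n \neq j) + \Pr(I_{n-1} \neq j),
\]
which holds because the only way $I_n = I_{n-1}$ fails while both equal $j$ is vacuous: if $I_n \neq I_{n-1}$, at least one of them must differ from $j$. Writing $\Pr(I_n \neq j) = \sum_{i\neq j}\Pr(I_n = i)$, I would substitute this bound into the sum over $n\ge 2$ to get two pieces: $\sum_{n\ge 2}\lambda_n \sum_{i\neq j}\Pr(I_n=i)$ and $\sum_{n\ge 2}\lambda_n \sum_{i\neq j}\Pr(I_{n-1}=i)$.

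The next step is a re-indexing: in the second piece, let $m = n-1$ to rewrite it as $\sum_{m=1}^{N-1}\lambda_{m+1}\sum_{i\neq j}\Pr(I_m=i)$. Then I would add back non-negative terms (namely $\lambda_1 \sum_{i\neq j}\Pr(I_1=i)$ to the first piece and $\lambda_{N+1}\sum_{i\neq j}\Pr(I_N=i)$ to the second) so that both sums run over $n = 1,\dots,N$, which allows combining them into the single sum $\sum_{n=1}^N (\lambda_n + \lambda_{n+1})\sum_{i\neq j}\Pr(I_n = i)$. Adding back the $\lambda_1$ from the first block gives the claimed inequality.

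There is no serious obstacle here: the argument is a short and clean manipulation. The only thing to be careful about is the bookkeeping around the boundary indices $n=1$ and $n=N$, making sure that the terms being added are nonnegative and that the shift in index of $\lambda$ is correctly carried out so that the factor $(\lambda_n+\lambda_{n+1})$ appears in the final bound.
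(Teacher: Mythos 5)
Your proof is correct and follows essentially the same route as the paper's: peel off the guaranteed first switch contributing $\lambda_1$, bound $\Pr(I_n \neq I_{n-1})$ by $\sum_{i\neq j}\bigl(\Pr(I_{n-1}=i)+\Pr(I_n=i)\bigr)$ via the observation that a switch forces at least one of the two arms to differ from $j$, and re-index to collect the factor $(\lambda_n+\lambda_{n+1})$. Your write-up is in fact slightly more explicit than the paper's about the boundary bookkeeping (adding the nonnegative terms at $n=1$ and $n=N$), which the paper glosses over.
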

\begin{proof}[Proof of Lemma \ref{lem_bound_switches}]
By convention, there is always a switch at round $1$. For subsequent rounds, when there is a switch at round $n$ at least one of $I_{n - 1}$ or $I_n$ is not equal to $j$. Thus, we have:
\[ \Pr(I_{n - 1} \neq I_n) \leq \sum_{i \neq j} \Pr(I_{n-1} = i)  + \Pr(I_{n} = i), \]
and the cumulative switching cost satisfies
\begin{align*}
     & S\lr{T, \lr{\lambda_n}_{n\ge 1}} = \lambda_1 + \sum_{n = 2}^N \lambda_n \Pr(I_{n - 1} \neq I_n) \\
     &\quad\qquad \leq \lambda_1 + \sum_{n = 2}^N \lambda_n \lr{ \sum_{i \neq j} \Pr(I_{n-1} = i)  + \Pr(I_{n} = i)} \\
     &\quad\qquad \leq \lambda_1 + \sum_{n = 1}^N  \sum_{i \neq j} \lr{\lambda_n + \lambda_{n+1}} \Pr(I_{n} = i),
\end{align*}
which concludes the proof.
\end{proof}
%
Armed with these results, we can move on to the proof of \Cref{th_param_switches}.
\begin{proof}[Proof of \Cref{th_param_switches}]
%
In order to apply our results to blocks, we first calculate an upper bound on the number of blocks $N$.
The length of the $n$-th block is defined as $|B_n| = \max \Big\{\Big\lceil\frac{3\lambda \sqrt{n}}{2\sqrt K}\Big\rceil, 1 \Big\}$. The sequence $\lr{B_n}_{n\ge 1}$ satisfies $|B_n| \geq b(n)$ for $b(n) = \frac{3\lambda \sqrt{n}}{2\sqrt K}$ and is non-decreasing.
Let $N^* = K^{1/3}(T/\lambda)^{2/3}$ and observe that:
\begin{align*}
    \sum_{n = 1}^{\floor{N^*} + 1} |B_n| 
&\ge
    \sum_{n = 1}^{\floor{N^*} + 1} \frac{3\lambda \sqrt{n}}{2\sqrt K}
\ge
    \int_{0}^{\floor{N^*} + 1} \frac{3\lambda \sqrt{n}}{2\sqrt K}
\\&\ge
    \int_{0}^{N^*} \frac{3\lambda \sqrt{n}}{2\sqrt K}
=
    \frac{\lambda}{\sqrt{K}}(N^*)^{3/2}
\ge
    T.
\end{align*}
Thus, we can upper bound $N$ by $K^{1/3}(T/\lambda)^{2/3} + 1$.
%
\paragraph{Proof of the adversarial bound.}
We start by focusing on the bound in the adversarial regime. To do so, we need to control the stability and penalty terms in \eqref{R_stab_pen}, and also the number of switches.
As we already said, the number of switches is bounded by the number of blocks,
$S_T \le N \le K^{1/3}(T/\lambda)^{2/3} + 1$,
and thus the cumulative switching cost satisfies 
$\lambda S_T \le \lambda N \le K^{1/3}T^{2/3}\lambda^{1/3} + \lambda$.

Next, we bound the quantity $\eta_n|B_n|^2$ for all $n \leq N$:
\begin{align}
    \frac{\eta_n}{2}|B_n|^2 \leq \frac{\sqrt{2}}{\sqrt{n}} \lr{\frac{3\lambda \sqrt{n}}{2\sqrt K} +1} \leq \frac{3\lambda}{\sqrt{2K}}  + \frac{\sqrt{2}}{\sqrt{n}}. \label{bound_eta_B2}
\end{align}
Note that even though the last block $B_N$ may be truncated, we can upper bound its length by the non-truncated length of that block. 

Then, we bound the inverse of the learning rate at round $N$, 
\[ \frac{1}{\eta_{N}} \leq \frac{\sqrt{N}}{2\sqrt{2}}\lr{\frac{3\lambda \sqrt{N}}{2\sqrt K} +1} \leq \frac{3\sqrt{2}}{8} \frac{\lambda N}{\sqrt{K}} +  \frac{\sqrt 2}{4} \sqrt{N}. \]

In order to bound the pseudo-regret over the $N$ blocks, we apply inequality \eqref{eq:wo-adv} from Theorem~\ref{th_wo_switching}. We then add the cumulative switching cost and use the upper bound on $N$ derived earlier,
\begin{align*}
     R(T, \lambda)&\leq 3\sqrt{2} \lambda N + 3\sqrt{2KN} + \lambda N + 1 \\
    &= (3\sqrt{2}+1)\lambda N + 3\sqrt{2KN} + 1\\
    & \leq 5.25 \lambda^{1/3}K^{1/3}T^{2/3} + 3\sqrt{2} \frac{K^{2/3}T^{1/3}}{\lambda^{1/3}} \\
    &\quad+ 3\sqrt{2K}+ 5.25\lambda + 6.25. 
\end{align*}
For small $\lambda$ the term $K^{2/3}(T/\lambda)^{1/3}$ dominates the expression.
However, when $\lambda \leq \frac{2}{3}\sqrt{\frac{K}{T}}$, then for all $n \leq T$ we have $\frac{3\lambda \sqrt{n}}{2\sqrt K} \leq \sqrt{\frac{n}{T}} \leq 1 $, which means that $|B_n| = 1$. In this case the algorithm is not using blocks and we have
$\lambda S_T \leq \lambda T \leq \frac{2}{3}\sqrt{KT}$. As we also have $a_n \leq 1$, we get $\frac{\sqrt{2}}{\sqrt{n}} \leq \eta_n \leq \frac{2 \sqrt 2}{\sqrt n}$. In this case we use Lemmas~ \ref{lem_stab} and \ref{lem_pen} to bound the stability and the penalty terms and obtain that stability and penalty are both bounded by $2\sqrt{2KN}$. Thus, overall, for $\lambda \leq \frac{2}{3}\sqrt{\frac{K}{T}}$ we have $R(T, \lambda) \leq 6.4 \sqrt{KT}$,
and for $\lambda \geq \frac{2}{3}\sqrt{\frac{K}{T}}$ we have $K^{2/3}(T/\lambda)^{1/3} \leq 1.15 \sqrt{KT}$.

Piecing together all parts of the bound finishes the proof.
%
\paragraph{Proof of the stochastically constrained adversarial bound.}
We now derive refined guarantees in the stochastically constrained adversarial regime with a unique best arm $i^*$. We start by deriving bounds for the stability and penalty terms in \eqref{R_stab_pen}. 

Let $N_0$ be a constant, such that for $n \geq N_0$ we have $\eta_n |B_n| \leq \frac{1}{4}$. 
We note that $\eta_n |B_n|  \leq \frac{2\sqrt 2}{\sqrt n}$, so picking $N_0 = 128$ works.
For the stability term we use the second part of Lemma~\ref{lem_stab} with $j = i^*$. Using \eqref{bound_eta_B2} to bound $\frac{\eta_n}{2}|B_n|^2$ we obtain that the stability term is upper bounded by
\begin{align*}
     \sum_{n = 1}^{N}&\lr{\frac{3\sqrt{2} \lambda}{2\sqrt{K}}  + \frac{\sqrt{2}}{\sqrt{n}}} \sum_{i \neq i^*} \lr{\sqrt{\mathbb{E} \left[ p_{n, i}\right]} + 2.5 \mathbb{E} \left[ p_{n, i}\right]} \\
        &\qquad\qquad\qquad\qquad\qquad +  \sum_{n = 1}^{N_0} \ \lr{\frac{3\sqrt{2}}{2} \frac{\lambda}{\sqrt{K}}  + \frac{\sqrt{2}}{\sqrt{n}}}. 
        \end{align*}
For the penalty term, we first bound the difference between the inverse of two consecutive learning rates.
\begin{align*}
     \eta_n^{-1} &- \eta_{n-1}^{-1}\\
    & = \lr{\frac{3\lambda \sqrt{n}}{2\sqrt K} + 1} \frac{\sqrt{n}}{2\sqrt{2}}  - \lr{\frac{3\lambda \sqrt{n-1}}{2\sqrt K}+1} \frac{\sqrt{n-1}}{2\sqrt{2}} \\
    & = \frac{3\sqrt 2\lambda}{8\sqrt K} + \frac{\sqrt{n} - \sqrt{n-1}}{2\sqrt{2}}\\
    &\leq \frac{3\sqrt 2\lambda}{8\sqrt K} + \frac{\sqrt 2}{4\sqrt{n}}.
\end{align*}
Now we use the second part of Lemma~\ref{lem_pen} to bound the penalty term as follows
	\begin{align*}
		\sum_{n = 1}^{N}\lr{\frac{3\sqrt 2\lambda}{2\sqrt K} + \frac{\sqrt 2}{\sqrt{n}}} \sum_{i \neq i^*}  \lr{\sqrt{\E[p_{n, i}]} - \frac{1}{2}\E[p_{n, i}]}
		 + 1.
	\end{align*}
Summing the two bounds, and using that for all $n, i$, $\E[p_{n, i}] \leq \sqrt{\E[p_{n, i}]}$, we have:
\begin{multline*}
    R_T\leq 
    \sum_{n = 1}^{N} \lr{\lr{\frac{6\sqrt{2}\lambda}{\sqrt{K}}  + \frac{4\sqrt{2}}{\sqrt{n}}} \sum_{i \neq i^*} \sqrt{\E[p_{n, i}]}}\\
+  \frac{3\sqrt{2}\lambda}{2\sqrt{K}} N_0  + 2\sqrt{2N_0} + 1. 
\end{multline*}
Now we use the self-bounding technique \citep{ZS21}, which states that if $L$ and $U$ are such that $L \leq R \leq U$, then $R \leq 2U - L$. For the lower bound $L$, we use the following identity for the regret
\[
    R_T = \sum_{n = 1}^{N} |B_n|\sum_{i \neq i^*} \Delta_i \E[p_{n, i}],
\]
where $B_N$ is truncated, so that $|B_1|+\cdots+|B_N| = T$. Using the previous expression for the upper bound $U$, we get: 
\begin{align*}
    R_T\leq & \sum_{n = 1}^{N} \lr{\frac{12\sqrt{2}\lambda}{\sqrt{K}}  + \frac{8\sqrt{2}}{\sqrt{n}}} \sum_{i \neq i^*} \sqrt{\E[p_{n, i}]}\\
    & - \sum_{n = 1}^{N} |B_n|\sum_{i \neq i^*} \Delta_i \E[p_{n, i}]
+  \frac{544\lambda}{\sqrt{K}}  +66.
\end{align*}
We bound the cumulative switching cost using Lemma \ref{lem_bound_switches}:
\[ \lambda S_T \leq \lambda + \sum_{n = 1}^N \sum_{i \neq i^*} 2\lambda \E[p_{n, i}].\]
We add those two bounds together to obtain a bound on the regret with switching costs. Note (again) that 
$\E[p_{n, i}] \leq \sqrt{\E[p_{n, i}]}$ for all $n$ and $i$,
and that $\frac{\sqrt{2}}{\sqrt K} \leq 1$. Thus, we can upper bound the pseudo-regret with switching costs as:
\begin{align*}
    &R(T, \lambda)\\
&\le
    \sum_{n = 1}^{N}  \sum_{i \neq i^*} \Bigg(\left(14\lambda  + \frac{8\sqrt{2}}{\sqrt{n}}\right)  \sqrt{\E[p_{n, i}]}
 -
    \Delta_i |B_n|\E[p_{n, i}] \Bigg)\\
    &\quad+ 
    \frac{544\lambda}{\sqrt{K}} + \lambda +66.
\end{align*}
Now we note that each term in the inner sum is an expression of the form $a \sqrt{x} - b x$, which for $x \in [0,\infty]$ is maximized at $x = \frac{a^2}{4b}$. Put attention that the cumulative switching cost is part of the optimization problem.
So, for any $i$ and any $n < N$, we have:
\begin{align}
\nonumber
     &\lr{14\lambda  + \frac{8\sqrt{2}}{\sqrt{n}}} \sqrt{\E[p_{n, i}]} - \Delta_i |B_n|\E[p_{n, i}] \\
\nonumber
    &\qquad \leq  \frac{\lr{14\lambda  + \frac{8\sqrt{2}}{\sqrt{n}}}^2}{4 \Delta_i |B_n|} \\
\label{eq:some-terms}
    &\qquad \leq  \frac{(14\lambda)^2}{4 \Delta_i \lr{\frac{3\lambda \sqrt{n}}{2\sqrt K}}} + 2 \frac{14\lambda \lr{\frac{8\sqrt{2}}{\sqrt{n}}}}{4 \Delta_i} + \frac{\lr{\frac{8\sqrt{2}}{\sqrt{n}}}^2}{4\Delta_i} \\
\label{eq:3-terms}
&\qquad \leq  \frac{33\lambda \sqrt{K}}{\Delta_i \sqrt{n}} + \frac{80\lambda}{\Delta_i \sqrt{n}} + \frac{32}{\Delta_i n},
\end{align}
where in the first term of \eqref{eq:some-terms} we have lower bounded $|B_n|$ by $b_n$ and in the last two terms by 1.
As the last block may be truncated, for $n = N$ we bound $|B_N|$ in the first term in \eqref{eq:3-terms} by $1$, leading to
\begin{multline*}
    \lr{14\lambda  + \frac{8\sqrt{2}}{\sqrt{N}}} \left(\sqrt{\E[p_{N, i}]} - \Delta_i |B_N|\E[p_{N, i}] \right) \\
    \leq  \frac{49\lambda^2}{\Delta_i} + \frac{80\lambda}{\Delta_i \sqrt{n}} + \frac{32}{\Delta_i n},
\end{multline*}
All that remains is to sum over $n$. For the first term in \eqref{eq:3-terms} we have:
\begin{align*}
 \frac{49\lambda^2}{\Delta_i} + \sum_{n = 1}^{N-1} \frac{33\lambda \sqrt{K}}{\Delta_i \sqrt{n}}
 & \leq 66\frac{\lambda \sqrt{K(N-1)}}{\Delta_i} +  \frac{49\lambda^2}{\Delta_i} \\
 & \leq  66\frac{\lambda^{2/3}T^{1/3}K^{2/3}}{\Delta_i} + \frac{49\lambda^2}{\Delta_i}.
\end{align*}
Similarly, the second term in \eqref{eq:3-terms} gives:
\[
\sum_{n = 1}^{N} \frac{80\lambda}{\Delta_i \sqrt{n}} \leq 160 \frac{\lambda \sqrt{N}}{\Delta_i} \leq  160\frac{\lambda^{2/3}T^{1/3}K^{1/6} + \lambda}{\Delta_i}.
\]
For the last term in \eqref{eq:3-terms}, we use the fact that $N \leq T$ and we have:
\[
\sum_{n = 1}^{N} \frac{32}{\Delta_i n} \leq \frac{32 \ln T}{\Delta_i} + \frac{32}{\Delta_i}.
\]
Putting everything together finishes the proof.
\end{proof}

\section{Experiments}
We compare the performance of \TsallisB \ to different baselines, both in the stochastic and in the stochastically constrained adversarial regime. 
We compare \TsallisB\ with block lengths chosen as in Theorem \ref{th_param_switches} against Tsallis-INF without blocks, and against the BaSE algorithm of \citet{NIPS2019_8341}, which achieves a regret of $\mathcal{O} \lr{\sum_{i \neq i^*} \frac{\log T}{\Delta_i}}$ with $\mathcal{O}\lr{\log T}$ switches in the stochastic regime. We use $T$ to tune the parameters of BaSE, and we consider both arithmetic and geometric blocks ---see \citep{NIPS2019_8341} for details.

We also include in our baselines the EXP3 algorithm with a time-varying learning rate, and the block version of EXP3, where the blocks have length $\lambda^{2/3}\frac{T^{1/3}}{K^{1/3}}$. Both block length and learning rate are chosen according to the analysis of EXP3 in the adversarial regime.


In the experiments, we fix the number of arms $K = 8$, and set the expected loss of a suboptimal arm to $0.5$. We generate binary losses using two sets of parameters: an ``easy'' setting, where the gaps $\Delta = 0.2$ are large and the switching costs $\lambda = 0.025$ are small. A ``hard'' setting, where the gaps $\Delta = 0.05$ are small and the switching costs $\lambda = 1$ are large. For each experiment, we plot the pseudo-regret, the number of switches, and the pseudo-regret with switching cost. This allows us to observe the trade-off between the pseudo-regret and the number of switches.

 In the first experiment (Figure \ref{fig:exp1_stoch_data_easy}) we use stochastic i.i.d.\ data with the easy setting ($\Delta = 0.2$ and $\lambda = 0.025$). As the gaps are large, even the methods that do not use blocks are not making many switches, and the best performance is achieved by Tsallis-INF without blocks. In Figure \ref{fig:exp1_stoch_data_hard} we use the hard setting ($\Delta = 0.05$ and $\lambda = 1$). In this case, we see a trade-off between achieving a small pseudo-regret and limiting the cumulative switching cost. The small value of $\Delta$ forces a larger number of switches, and because the cost of switching is now large, the cumulative switching cost dominates the pseudo-regret with switching cost.

In Figure \ref{fig:exp2_lambda=0}, we test a stochastic setting with small gaps and zero switching cost. In this case, we observe that Tsallis-Inf and \TsallisB \ outperform both EXP3 and the BaSE algorithms. Note that here \TsallisB\ and Tsallis-Inf have very similar performances, though not identical due to a slight difference in the tuning of learning rates. 

We present a wider range of experiments in Appendix~\ref{appen:experiments}. We show that our algorithm outperforms the BaSE algorithm in the  stochastically constrained adversarial regime. Being an elimination-based algorithm, BaSE also fails in the adversarial regime. 

\begin{figure}[h!]
  \centering
    \includegraphics[width=0.8\linewidth]{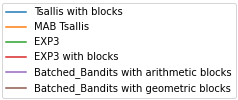}
    \caption{Legend for all plots.}
    \vspace*{5mm}
    \includegraphics[trim={0 0.2cm 0 0},clip,width=\linewidth]{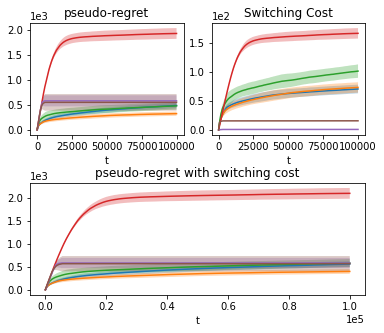}
    \vspace*{-7mm}
    \caption{Stochastic losses, $\Delta = 0.2$ and $\lambda = 0.025$ (easy setting).}
      \label{fig:exp1_stoch_data_easy}
     \vspace*{5mm}
    \includegraphics[trim={0 0.2cm 0 0},clip, width=\linewidth]{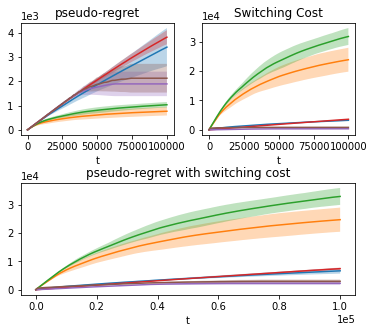}
    \vspace*{-7mm}
  \caption{Stochastic losses, $\Delta = 0.05$ and $\lambda = 1$ (hard setting).}
  \label{fig:exp1_stoch_data_hard}
\end{figure}

\begin{figure}[h!]
  \centering
    \includegraphics[trim={0 5.87cm 0 0},clip, width=\linewidth, ]{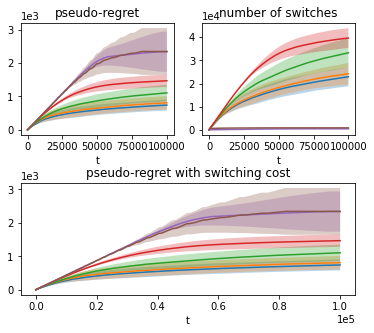}
    \vspace*{-7mm}
  \caption{Stochastic losses and no switching cost, $\lambda = 0$ and $\Delta = 0.05$. As the switching costs are $0$, the pseudo-regret and the pseudo-regret with switching costs are equal.}
  \label{fig:exp2_lambda=0}
\end{figure}

\section{Discussion}
We introduced \TsallisB, the first algorithm for multiarmed bandits with switching costs that provides adversarial pseudo-regret guarantees simultaneously with improved pseudo-regret guarantees in the stochastic regime, as well as the more general stochastically constrained adversarial regime. The adversarial regret bound matches the minimax lower bound within constants, and guarantees $T^{2/3}$ scaling of the regret in time. The stochastic and stochastically constrained adversarial bounds reduce the dependence of the regret on time down to $T^{1/3}$. Our experiments demonstrate that \TsallisB\ is competitive with the relevant benchmarks over a range of settings: in the stochastic setting, it is competitive with state-of-the-art algorithms for stochastic bandits with switching costs, and outperforms state-of-the-art adversarial algorithms. In the adversarial setting, it is competitive with state-of-the-art adversarial algorithms and significantly outperforms the stochastic ones.

Our work opens multiple directions for future research. For example, it is known that in the stochastic setting with switching costs it is possible to achieve logarithmic regret scaling, but it is unknown whether it is achievable simultaneously with the adversarial regret guarantee. It is also unknown whether logarithmic regret scaling is achievable for the more general stochastically constrained adversarial regime with switching costs (even with no simultaneous requirement of an adversarial regret guarantee). Elimination of the assumption on uniqueness of the best arm in the stochastically constrained adversarial regime is another challenging direction to work on. Unfortunately, for now it is unknown how to eliminate this assumption even in the analysis of the Tsallis-INF algorithm for multiarmed bandits without switching costs. But while in the setting without switching costs the assumption has been empirically shown to be an artifact of the analysis having no negative impact on the regret \citep{ZS21}, in the setting with switching costs treating multiple best arms is more challenging, because switching between best arms is costly.



\section*{Acknowledgements}
CR and YS acknowledge partial support by the Independent Research Fund Denmark, grant number 9040-00361B.
NCB is partially supported by the MIUR PRIN grant Algorithms, Games, and Digital Markets (ALGADIMAR) and by the EU Horizon 2020 ICT-48 research and innovation action number 951847, ELISE (European Learning and Intelligent Systems Excellence).


\bibliography{references, ncb}
\bibliographystyle{icml2021}

\onecolumn
\appendix

\section{Properties of the Potential Function}
\label{appen:prop_poten}

We recall several properties of the potential function provided by \citet[Appendix C]{ZS21}, which we use in our proofs. We use $v =  (v_i)_{i=1,\dots,K}$ to denote a column vector $v\in \R^K$ with elements $v_1,\dots,v_K$, and $\operatorname{diag}(v)$ to denote a $K\times K$ matrix with $v_1,\dots,v_K$ on the diagonal and 0 elsewhere. For a positive semidefinite matrix $M$ we use $||\cdot||_M=\sqrt{\langle\cdot, M\cdot\rangle}$ to denote the canonical norm with respect to $M$.
The potential function is defined as
\begin{equation*}
    \Psi_n(p) = - \sum_i \frac{4 \sqrt{p_i} - 2 p_i}{\eta_n}
\end{equation*}
and we have
\begin{equation*}
    \nabla\Psi_n(p) = - \lr{ \frac{2p_i^{-1/2} - 2}{\eta_n}}_{i = 1, \dots, K}
\end{equation*}
and 
\begin{equation*}
    \nabla^2\Psi_n(p) = \text{diag} \lr{\lr{ \frac{p_i^{-3/2}}{\eta_n}}_{i = 1, \dots, K}}.
\end{equation*}
For $C \leq 0$, the convex conjugate and the gradient of the convex conjugate are
\begin{equation}
    \Psi^*_n(C) = \max_p \lrc{\lrscal{p, C} + \sum_i \frac{4\sqrt{p_i} - 2 p_i}{\eta_n}}, \label{psi*}
\end{equation}
\begin{equation}
    \nabla\Psi^*_n(C) = \argmax_p \lrc{ \lrscal{p, C} + \sum_i \frac{4 \sqrt{p_i} - 2 p_i}{\eta_n} }= \lr{ \lr{-\frac{\eta_n}{2} C_i + 1}^{-2}}_{i = 1, \dots, K}. \label{nabla_psi*}
\end{equation}
We use $\Delta^{K-1}$ to denote the probability simplex over $K$ points and $\mathcal I_{\Delta^{K-1}}(x) =  \begin{cases} 0 &\mbox{if } x \in \Delta^{K-1}\\
-\infty& \mbox{otherwise} \end{cases}$.
We also use:
\begin{equation*}
\Phi_n(C) = \lr{\Psi_n + \mathcal I_{\Delta^{K-1}}}^*(C) = \max_{p \in \Delta^{K-1}} \lrc{\lrscal{p, C} + \sum_i \frac{4 \sqrt{p_i} - 2 p_i}{\eta_n}},
\end{equation*}
and 
\begin{equation*}
    \nabla\Phi_n(C) = \argmax_{p \in \Delta^{K-1}} \lrc{\lrscal{p, C} + \sum_i \frac{4 \sqrt{p_i} - 2 p_i}{\eta_n}}.
\end{equation*}
$\Phi_n $ is a constrained version of $\Psi^*_n$, where $p$ is restricted to the probability simplex.
Following \citet[Section 4.3]{ZS21}, there exists a Lagrange multiplier $\nu$ such that:
\begin{equation}
    p_n = \nabla \Phi_n(-\Tilde C_{n-1}) = \nabla \Psi^*_n (-\Tilde C_{n-1} + \nu \bm 1_K) \label{pn_lagrange}
\end{equation}

It is important to note that $\Psi_n$ is a Legendre function, which implies that its gradient is invertible and $(\nabla \Psi_n)^{-1} = \Psi_n^*$.
By the Inverse Function theorem 
\begin{equation}
    \nabla^2\Psi_n^*\big(\nabla\Psi_n(w)\big) = \big(\nabla^2\Psi_n(w)\big)^{-1}.
    \label{prop_pot_inv_fun_th}
\end{equation}
The Bregman divergence associated with a Legendre function $f$ is defined by:
\begin{equation}
   D_f(x, y) = f(x) - f(y) - \lrscal{\nabla f(y), x - y}.\label{prop_pot_breg}
\end{equation}
By Taylor's theorem,
\begin{equation}
   D_f(x, y) \leq \frac{1}{2} \| x - y\|^2_{\nabla^2f(z)}.\label{prop_pot_breg_upperbound}
\end{equation}
for some $z \in \text{conv}(x, y)$.

\section{Proofs of the Lemmas}
\label{appen:proof_stab_pen}

Here we provide a proof of the bound on the stability term in Lemma~\ref{lem_stab}. The scaling of the stability term directly depends on the bound on the losses, so we adapt the bound for sequences of losses that are not in the $[0, 1]$ interval. 
Lemma~\ref{lem_pen} follows directly from \citet[Lemma 12]{ZS21}. We focus on the case where $\alpha = \frac{1}{2}$ and in the second part of the lemma we pick $x = -\infty$.

\subsection{Bounding the Stability}
The proof of Lemma \ref{lem_stab} closely follows the proof of the corresponding result by \citet[Lemma 11]{ZS21}. The main adaptation that we make is to take care of the losses that take values in $[0,b_n]$ intervals rather than $[0,1]$ intervals.

In order to prove Lemma $\ref{lem_stab}$, we first need to adapt \citet[Lemma 17]{ZS21} to properly scale with the range $b_n$. Furthermore, we take advantage of the fact that $\alpha = \frac{1}{2}$ in order to derive a tighter multiplicative constant.

\begin{lemma}
\label{lem_aux_stab}
Let $p \in \Delta^{K-1}$ and $\tilde p = \nabla\Psi_n^* (\nabla\Psi_n(p) - c)$. If $\eta_n b_n \leq \frac{1}{4}$ and $\alpha = \frac{1}{2}$, then for all $c \in \mathbb R^K$ with $c_i \leq -b_n$ for all $i$, it holds that $\tilde p_i^{3/2} \leq 1.5 p_i^{3/2}$ for all $i$.
\end{lemma}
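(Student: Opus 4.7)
The plan is to follow the proof strategy of \citet[Lemma 17]{ZS21}, adapted so that the varying loss bound $b_n$ plays the role of the constant $1$ used there. First I would substitute the explicit formulas for $\nabla\Psi_n$ and $\nabla\Psi_n^{*}$ recalled in Appendix~\ref{appen:prop_poten} into the definition of $\tilde p$. Since $\nabla\Psi_n(p)_i = (2 - 2\,p_i^{-1/2})/\eta_n$ and $\nabla\Psi_n^{*}(C)_i = \bigl(1 - \tfrac{\eta_n}{2}C_i\bigr)^{-2}$, this simplification gives the closed form
\[
\tilde p_i \;=\; \bigl(p_i^{-1/2} + \tfrac{\eta_n}{2}\, c_i\bigr)^{-2} \;=\; \frac{p_i}{\bigl(1 + \tfrac{\eta_n}{2}\, c_i\, p_i^{1/2}\bigr)^{2}}.
\]
Taking the $3/2$-th power, the desired bound $\tilde p_i^{3/2}\le 1.5\, p_i^{3/2}$ reduces to the one-variable scalar inequality $\bigl|1 + \tfrac{\eta_n}{2}\, c_i\, p_i^{1/2}\bigr| \ge (2/3)^{1/3}$ in $x_i := \tfrac{\eta_n}{2}\, c_i\, p_i^{1/2}$.

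Second, I would verify this scalar inequality using the two structural hypotheses. The bound $\eta_n b_n \le 1/4$ with $p_i\in[0,1]$ yields $\tfrac{\eta_n}{2}\, b_n\, p_i^{1/2} \le 1/8$, which controls the size of a $b_n$-scale perturbation of $x_i$ away from $0$; combined with the sign condition $c_i \le -b_n$, this pins down the location of $x_i$ tightly enough that a direct case analysis — separating whether $1 + x_i$ sits above $(2/3)^{1/3}$ or below $-(2/3)^{1/3}$, exactly as in the ZS21 argument — closes the inequality. The specific constant $1.5$ emerges from the $\alpha = 1/2$ specialization of the Tsallis potential, matching the constant obtained in ZS21 for the unit-range setting.

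The main obstacle I anticipate is the tightness of the constant: since $1.5^{2/3}\approx 1.31$, there is very little numerical slack in the conclusion, so the case analysis must reuse the ZS21 accounting essentially verbatim rather than rely on looser algebraic manipulations. A secondary concern is making sure that $\eta_n b_n \le 1/4$ is used at full strength in the borderline regime where $1+x_i$ could change sign — this is the only place where the specific constant $1/4$ (as opposed to a larger constant) is actually needed — and that the resulting bound integrates cleanly with the Taylor-remainder step in the proof of Lemma~\ref{lem_stab}, which is the only place where Lemma~\ref{lem_aux_stab} is subsequently invoked.
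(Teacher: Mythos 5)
Your closed-form reduction is correct and is, up to presentation, the same algebra as the paper's own proof: writing $\tilde p_i = p_i\lr{1 + x_i}^{-2}$ with $x_i = \tfrac{\eta_n}{2}c_i p_i^{1/2}$, the claim becomes $\lr{1+x_i}^{-3}\le 1.5$. The genuine gap is the step where you assert that the hypotheses ``pin down the location of $x_i$ tightly enough'' for a two-case analysis ($1+x_i$ above $(2/3)^{1/3}$ or below $-(2/3)^{1/3}$) to close the inequality. Under the condition as you use it, $c_i \le -b_n$, nothing of the sort is true: that condition leaves $c_i$ unbounded below, so $x_i$ can be any sufficiently negative number and $1+x_i$ can land anywhere in $(-\infty,1)$ --- in particular in the middle region near $0$, where $\tilde p_i = p_i\lr{1+x_i}^{-2}$ blows up. Concretely, taking $c_i$ close to $-2/(\eta_n\sqrt{p_i})$ satisfies $c_i\le -b_n$ and $\eta_n b_n\le \tfrac14$, yet makes $\tilde p_i$ arbitrarily large. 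So your proposed case split is not exhaustive, and no bookkeeping can make it so: with the inequality read literally as ``$c_i\le -b_n$'' the lemma is simply false. That sign is a typo in the statement.

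The condition that is actually needed --- and is what the paper uses, both in the first display of its own proof (which asserts $c_i \ge -b_n$) and at the point of application inside the proof of Lemma~\ref{lem_stab}, where the shifted losses satisfy $\tilde c_{n,i} - x = -x \ge -b_n$ --- is the \emph{lower} bound $c_i \ge -b_n$. With it, your reduction finishes in one line and with no cases at all: positive coordinates of $c$ only decrease $\tilde p_i$, and for $c_i\in[-b_n,0)$ we get $x_i \ge -\tfrac{\eta_n}{2}b_n p_i^{1/2} \ge -\tfrac18$, hence $1+x_i \ge \tfrac78$ and $\lr{1+x_i}^{-3} \le (8/7)^3 = 512/343 \le 1.5$. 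This is exactly the paper's argument (stated there as $p_i^{-1/2}-\tilde p_i^{-1/2}\le \tfrac{\eta_n b_n}{2}$, then cubing). Note also that under the correct hypothesis $1+x_i \ge 7/8 > (2/3)^{1/3}\approx 0.874$, so $1+x_i$ never changes sign: the ``borderline regime'' where you planned to use $\eta_n b_n\le\tfrac14$ at full strength does not exist; the constant $\tfrac14$ is needed only to make $\lr{1-\tfrac{\eta_n b_n}{2}}^{-3}\le 1.5$. The fix to your proposal is therefore not a finer case analysis but flipping the inequality on $c_i$ and then applying your own formula.
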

Note that we obtain a slightly better constant factor $1.5$ rather than factor $2$ in the more general analysis by \citet[Lemma 17]{ZS21}.
\begin{proof}
Since $\nabla\Psi_n$ is the inverse of $\nabla\Psi_n^*$, we have:

\begin{align*}
  &\nabla\Psi_n(p)_i - \nabla\Psi_n( \tilde p)_i   = c_i \geq - b_n, \\
  &\frac{p_i^{-1/2} - 1}{\frac{1}{2}\eta_n} - \frac{\tilde p_i^{-1/2} - 1}{\frac{1}{2}\eta_n}  \leq b_n, \\ 
  &\frac{p_i^{-1/2} - 1}{(\frac{1}{2}\eta_n} - \frac{\tilde p_i^{-1/2} - 1}{\frac{1}{2}\eta_n}  \leq b_n, \\ 
   & \frac{p_i^{-1/2} - 1}{\frac{1}{2}\eta_n b_n} - \frac{\tilde p_i^{-1/2} - 1}{\frac{1}{2}\eta_n b_n}  \leq 1, \\ 
    &\tilde p_i^{1/2} \leq \frac{p_i^{1/2}}{1 - \eta_n b_n\frac{1}{2} p_i^{1/2}}\leq \frac{p_i^{1/2}}{1 - \eta_n b_n\frac{1}{2}}, \\ 
    &\tilde p_i^{3/2}\leq \frac{p_i^{3/2}}{\lr{1 - \frac{1}{2}\eta_n b_n}^{3}}.
\end{align*}
It remains to bound $\lr{1 - \frac{1}{2}\eta_n b_n}^{-3}$. Using the fact that $\eta_n b_n \leq \frac{1}{4}$, we have:
 \[ \lr{1 - \frac{1}{2}\eta_n b_n}^{-3} \leq \lr{1 - \frac{1}{8}}^{-3} \leq \frac{8^3}{7^3} \leq 1.5.\]
\end{proof}
With this Lemma at hand, we can move on to the proof of 
 Lemma \ref{lem_stab}. We first verify that the bound still holds for losses outside of the $[0, 1]$ interval, and then we observe how the bound scales in terms of the bounds $b_n$.
\begin{proof}[Proof of Lemma \ref{lem_stab}]
The beginning of the proof is useful for both statements of the Lemma. 

By definition, we have $p_n = \nabla\Phi_n(- \Tilde C_{n-1})$ and $c_{n, I_n} = \lrscal{p_n, \tilde c_n}$. We also have $\Phi_n(C + x\bm 1_K) = \Phi_n(C) + x$, because 
 \begin{align*}
 \Phi_n(C + x\bm 1_K) &= \max_{p \in \Delta^{K-1}} \lrc{\lrscal{p, C + x\bm 1_K} + \sum_i \frac{4 \sqrt{p_i} - 2 p_i}{\eta_n}} \\
 & =  \max_{p \in \Delta^{K-1}} \lrc{\lrscal{p, C} + \lrscal{p,  x\bm 1_K }  + \sum_i \frac{4 \sqrt{p_i} - 2 p_i}{\eta_n}} \\
 & =  \max_{p \in \Delta^{K-1}} \lrc{\lrscal{p, C} + x  + \sum_i \frac{4 \sqrt{p_i} - 2 p_i}{\eta_n}} = \Phi_n(C) + x.
\end{align*}
Using Equation \ref{pn_lagrange}, there exists a constant $\lambda_n$, such that $\nabla\Psi_n(p_n) = -\Tilde C_{n - 1} + \lambda_n \bm 1_K $. Hence, for any $x \in \mathbb R$:
\begin{align*}
\E[c_{n, I_n} + \Phi_n(-\Tilde C_n) + \Phi_n(-\Tilde C_{n - 1})] 
& = \E[\lrscal{p_n, \tilde c_n} + \Phi_n(-\Tilde C_n) + \Phi_n(-\Tilde C_{n - 1})] \\
& = \E[\lrscal{p_n, \tilde c_n} + \Phi_n(\nabla\Psi_n(p_n) - \tilde c_n) + \Phi_n(\nabla\Psi_n(p_n))] \\
& = \E[\lrscal{p_n, \tilde c_n - x\bm 1_K} + \Phi_n(\nabla\Psi_n(p_n) - \tilde c_n + x\bm 1_K) + \Phi_n(\nabla\Psi_n(p_n))] \\
& \leq \E[\lrscal{p_n, \tilde c_n - x\bm 1_K} + \Psi^*_n(\nabla\Psi_n(p_n) - \tilde c_n + x\bm 1_K) + \Psi^*_n(\nabla\Psi_n(p_n))] \numberthis{} \label{proof_l1_1}\\
& = \E[ D_{\Psi^*_n} \lr{\nabla\Psi_n(p_n) - \tilde c_n + x\bm 1_K, \nabla\Psi_n(p_n)}] \\
& \leq \E[ \max_{z \in \text{conv}(\nabla \Phi_n(p_n), \nabla\Psi_n(p_n) - \tilde c_n + x\bm 1_K)}  \frac{1}{2} \| \tilde c_n - x \bm 1_K \|^2_{\nabla^2\Psi^*_n(z)}] \numberthis{} \label{proof_l1_2} \\
& = \E[ \max_{p\in \text{conv}(p_n, \nabla\Psi^*_n(\nabla\Psi_n(p_n) - \tilde c_n + x\bm 1_K))}  \frac{1}{2} \| \tilde c_n - x \bm 1_K \|^2_{\nabla^2\Psi_n(p)^{-1}}] \numberthis{} \label{proof_l1_3} \\
& \leq \E[ \sum_{i = 1}^K \max_{p\in \lrs{p_{n, i},\nabla\Psi^*_n(\nabla\Psi_n(p_n) - \tilde c_n + x\bm 1_K))_i}}  \frac{\eta_n}{2}\lr{\tilde c_{n, i} - x}^2 p_i^{3/2} ],
\end{align*}
where \Cref{proof_l1_1} uses that $\Phi_n(x) \leq \Psi^*_n(x)$, because $\Phi_n$ is a constrained version of $\Psi^*_n$, and $\Phi_n(\nabla\Psi_n(p_n)) = \Psi^*_n(\nabla\Psi_n(p_n))$, because $\arg\max_{p \in \mathbb R^K} \lrscal{p, \nabla\Psi_n(p_n)} - \Psi_n(p) = p_n$ and $p_n$ is in the probability simplex, so the constraint in $\Phi_n$ is inactive. 
\Cref{proof_l1_2} follows from \Cref{prop_pot_breg_upperbound}, and \Cref{proof_l1_3} from \Cref{prop_pot_inv_fun_th}.

\paragraph{First part of the Lemma} 
In order to prove the first part of the Lemma, we set $x = 0$ and observe that $\nabla \Psi^*_n\lr{\nabla\Psi_n(p_n) - \tilde c_n}_i \leq \nabla \Psi^*_n\lr{\nabla\Psi_n(p_n)}_i = p_{n, i}$, because the losses are non-negative and $\nabla \Psi^*_n(C) = \argmax_p \lrc{ \lrscal{p, C} + \sum_i \frac{4 \sqrt{p_i} - 2 p_i}{\eta_n} } $ is a monotonically increasing function of $C$. This observation implies that the highest value of $\lrs{p_{n, i},\nabla\Psi^*_n(\nabla\Psi_n(p_n) - \tilde c_n + x\bm 1_K))_i}$ is $p_{n, i}$. Since the importance weighted losses are $0$ for the arms that were not played, we have:
 \begin{align*}
      \E[ \sum_{i = 1}^K \max_{p\in \lrs{p_{n, i},\nabla\Psi^*_n(\nabla\Psi_n(p_n) - \tilde c_n + x\bm 1_K))_i}}  \frac{\eta_n}{2}\tilde c_{n, i}^2 p_i^{3/2} ] 
      & =  \E[ \sum_{i = 1}^K  \frac{\eta_n}{2} \tilde c_{n, i}^2 p_{n, i}^{3/2} ] \\
      & =  \E[ \sum_{i = 1}^K  \frac{\eta_n}{2} \frac{c_{n, i}^2}{p_{n, i}^2} \Ind{I_n = i} p_{n, i}^{3/2} ] \\
      & = \E[ \sum_{i = 1}^K  \frac{\eta_n}{2} b_n^2 p_{n, i}^{1/2} ] \\
      & = \frac{\eta_n}{2} b_n^2 \sum_{i = 1}^K  \E[   p_{n, i}]^{1/2}, 
 \end{align*}
 where  we use the fact that $c_{n, i}^2 \leq b_n^2$, and that $\mathbb E_n\big[\Ind{I_n = i}\big] = p_{n, i}$, where $\Ind{I_n=i}$ is the indicator function of the event $\{I_n=i\}$ and the expectation is taken with respect to all randomness prior to round $n$. We use Jensen's inequality in the last inequality. Finally, summing on $n$ finishes this part of the proof.
 
 \paragraph{Second part of the Lemma} 
We now set $x = \mathds 1_n[I_n = j]c_{n, j}$, where $ 1_n[\cdot]$ is conditioned on all randomness previous to block $n$. In the calculation below, for the events $I_n \in [K] \backslash \lrc j$, we have $x = 0$ and use the same derivation as in the previous case. When $I_n = j$, for $i \neq j$ we have $\tilde c_{n, i} - x = - x \geq - b_n$, and for $j$ we have $\tilde c_{n, j} - x \geq 0$.
For $i \neq j$ we use Lemma \ref{lem_aux_stab} to bound $\lr{\nabla\Psi^*_n(\nabla\Psi_n(p_n) - \tilde c_n + x\bm 1_K))_i}^{3/2} \leq 1.5p_{n, i}^{3/2}$ and for $j$ we use $\nabla \Psi^*_n(\nabla\Psi_n(p_n) - \tilde c_n)_j \leq \nabla \Psi^*_n(\nabla\Psi_n(p_n))_j = p_{n, j} $. Therefore, we can write
\begin{align*}
     & \E[ \sum_{i = 1}^K \max_{p\in \lrs{p_{n, i},\nabla\Psi^*_n(\nabla\Psi_n(p_n) - \tilde c_n + x\bm 1_K))_i}}  \frac{\eta_n}{2}\lr{\tilde c_{n, i} - x}^2 p_i^{3/2} ]  \\
    &\hspace{4cm}  = \sum_{i \neq j} \frac{\eta_n b_n^2}{2} \E[   p_{n, i}]^{1/2} + \mathbb E\lrs{ \mathds  1_n[I_n = j](j) \lr{\frac{\eta_n}{2}\lr{\frac{c_{n, j}}{p_{n, j}} - c_{n, j}}^2 p_{n, j}^{3/2} + \sum_{i \neq j} \frac{\eta_n}{2}c_{n, j}^2 1.5p_{n, i}^{3/2} }  } \\
    &\hspace{4cm}  = \sum_{i \neq j} \frac{\eta_n b_n^2}{2} \E[   p_{n, i}]^{1/2} + \E[  \frac{\eta_n b_n^2}{2}\lr{1 - p_{n, j}}^2 p_{n, j}^{1/2} + \sum_{i \neq j} \frac{1.5}{2}\eta_n b_n^2 p_{n, i}^{3/2} p_{n, j} ]\\
    &\hspace{4cm}  = \frac{\eta_n b_n^2}{2} \sum_{i \neq j}  \lr{\E[   p_{n, i}]^{1/2} + 2.5 \E[   p_{n, i}]},
    \end{align*}
where in the last step we used the fact that $\lr{1 - p_{n, j}}^2 p_{n, j}^{1/2} \leq \lr{1 - p_{n, j}} = \sum_{i \neq j} p_{n,i} $ for the middle term and $ p_{n, i}^{1/2} p_{n, j} \leq 1$ for the last term.
\end{proof}




\section{Proof of Theorem \ref{th_wo_switching} and its Corollary} 
A side result of our analysis generalizes the analysis of Tsallis-INF \citep{ZS21} to loss sequences that are not in the $[0, 1]^K$ range.

We start with the proof of Theorem \ref{th_wo_switching}. 

\begin{proof}[Proof of Theorem \ref{th_wo_switching}]
~\paragraph{The Adversarial Regime} The sequence of learning rates $\lr{\eta_t}_{t\ge 1}$ is positive and non decreasing. Therefore, we can apply the first parts of Lemmas \ref{lem_stab} and \ref{lem_pen}, and since $\sum_{i = 1}^K \sqrt{\E[p_{n, i}]}\leq \sqrt K$, we directly obtain the result:
 \[ R_T = \text{stability} + \text{penalty} \leq \sum_{t = 1}^T \frac{\eta_t}{2}b_t^2 \sqrt K + \frac{4\sqrt K}{\eta_T} + 1.\]

\paragraph{The Stochastically Constrained Adversarial Regime}

Now we derive refined guarantees in the stochastically constrained adversarial regime with a unique best arm $i^*$. We start by deriving bounds for the stability and the penalty. 

Let $T_0$ be a constant such that for all $t \geq T_0$ we have $\eta_t b_t \leq \frac{1}{4}$. Then by the last part of Lemma \ref{lem_stab} with $j=i^*$ we have:
\begin{align*}
\text{stab}  \leq &
     \sum_{t = 1}^{T} \frac{\eta_t}{2}b_t^2 \sum_{i \neq i^*} \lr{\sqrt{\mathbb{E} \left[ p_{t, i}\right]} + 2.5 \mathbb{E} \left[ p_{t, i}\right]} +  \sum_{t = 1}^{T_0}  \frac{\eta_t}{2}b_t^2. 
        \end{align*}
For the penalty, we  use the second part of Lemma \ref{lem_pen}:
	\begin{align*}
	    &\text{pen} \leq  
		\sum_{i \neq i^*_T}  \sum_{t = 1}^{T} 4\lr{\eta_t^{-1} - \eta_{t-1}^{-1}} \lr{\sqrt{\E[p_{t, i}]} - \frac{1}{2}\E[p_{t, i}]}
		 + 1.
	\end{align*}
We put the two bounds together and first group the $\sqrt{\E[p_{t, i}]}$ terms and $\E[p_{t, i}]$ terms.
\begin{align*}
    R_T\leq & \sum_{t = 1}^{T} \sum_{i \neq i^*}  \lr{\frac{\eta_t}{2}b_t^2 + 4\lr{\eta_t^{-1} - \eta_{t-1}^{-1}}} \sqrt{\E[p_{t, i}]} + \sum_{t = 1}^{T} \sum_{i \neq i^*}  \lr{\frac{5}{4}\eta_tb_t^2 - 2\lr{\eta_t^{-1} - \eta_{t-1}^{-1}}} \E[p_{t, i}]
    +  \sum_{t = 1}^{T_0}  \frac{\eta_t}{2}b_t^2  + 1.
\end{align*}
If $\frac{5\eta_t}{4}b_t^2 \geq 2 \lr{\eta_t^{-1} - \eta_{t-1}^{-1}}$ for all $t$, then the factor in front of $\E[p_{t, i}]$ is positive and by upper bounding $\E[p_{t, i}]$ by $\sqrt{\E[p_{t, i}]}$ and grouping the first and the second summations we obtain
\[
R_T\leq \sum_{t = 1}^{T} \sum_{i \neq i^*}  \lr{\frac{7}{4}\eta_tb_t^2 + 2\lr{\eta_t^{-1} - \eta_{t-1}^{-1}}} \sqrt{\E[p_{t, i}]} +  \sum_{t = 1}^{T_0}  \frac{\eta_t}{2}b_t^2  + 1.
\]
Otherwise, we upper bound the negative contribution $- 2\lr{\eta_t^{-1} - \eta_{t-1}^{-1}} \E[p_{t, i}]$ by zero and $\E[p_{t, i}]$ by $\sqrt{\E[p_{t, i}]}$ and obtain
\[
R_T\leq \sum_{t = 1}^{T} \sum_{i \neq i^*}  \lr{\frac{7}{4}\eta_tb_t^2 + 4\lr{\eta_t^{-1} - \eta_{t-1}^{-1}}} \sqrt{\E[p_{t, i}]} +  \sum_{t = 1}^{T_0}  \frac{\eta_t}{2}b_t^2  + 1.
\]
Overall, we have
\begin{align*}
    R_T\leq & \sum_{t = 1}^{T} \sum_{i \neq i^*}  \lr{\frac{7}{4}\eta_tb_t^2 + c\lr{\eta_t^{-1} - \eta_{t-1}^{-1}}} \sqrt{\E[p_{t, i}]} +  \sum_{t = 1}^{T_0}  \frac{\eta_t}{2}b_t^2  + 1,
\end{align*}
where
\[
    c = \begin{cases} 2, & \mbox{if~} \frac{5\eta_t}{4}b_t^2 \geq 2 \lr{\eta_t^{-1} - \eta_{t-1}^{-1}} \mbox{~for all $t$},\\4, & \mbox{otherwise.}  \end{cases}
\]

Now we use the self-bounding technique \citep{ZS21}. The self-bounding technique states that if $L$ and $U$ are such that $L \leq R \leq U$, then $R \leq 2U - L$. We use the lower bound stated in the theorem, and the upper bound from the previous expression, and we get: 
\begin{align*}
    R_T\leq &  \sum_{t = 1}^{T} \sum_{i \neq i^*} \lr{\lr{\frac{7}{2}\eta_tb_t^2 + 2c\lr{\eta_t^{-1} - \eta_{t-1}^{-1}}}  \sqrt{\E[p_{t, i}]} - \Delta_i b_t \E[p_{t, i}]}+ \sum_{t = 1}^{T_0}  \eta_t b_t^2  + 2 \\
    \leq &  \sum_{t = 1}^{T} \sum_{i \neq i^*} \frac{\lr{\frac{7}{2}\eta_tb_t^2 + 2c\lr{\eta_t^{-1} - \eta_{t-1}^{-1}}}^2}{4\Delta_i b_t} + \sum_{t = 1}^{T_0}  \eta_t b_t^2  + 2,
\end{align*}
where we used the fact that each term in the first summation is an expression of the form $a \sqrt{x} - b x$, which for $x\geq 0$ is bounded by $\frac{a^2}{4b}$.
\end{proof}
In Corollary~\ref{coro_fixed_B} we consider a special case, where the losses at each round are bounded by a constant $B$.
\begin{proof}[Proof of Corollary \ref{coro_fixed_B}]
The learning rate $\eta_t = \frac{2}{B\sqrt t}$ is a positive and non-increasing sequence, which allows us to use the results of Theorem \ref{th_wo_switching}.
\paragraph{The Adversarial Regime} 
In the adversarial regime, we can directly use the learning rate in the first part of Theorem \ref{th_wo_switching} and get:
\begin{align*}
    R_T \leq \sum_{t = 1}^T \frac{\eta_t}{2}B^2 \sqrt K + \frac{4\sqrt K}{\eta_T} + 1
    \leq 4B \sqrt{KT} +  1.
\end{align*}
\paragraph{The Stochastically Constrained Adversarial Regime} 
In order to use the second part of Theorem \ref{th_wo_switching}, we need to bound the difference between two successive learning rates. 
\[ \eta_{t}^{-1} - \eta_{t-1}^{-1} = \frac{B}{2} \lr{\sqrt{t} - \sqrt{t-1}} \leq \frac{B}{2\sqrt t}. \]
We pick $T_0 = 64$, which satisfies that for all $t \geq T_0$, we have $\eta_t B  = \frac{2}{\sqrt T}\leq \frac{1}{4}$.
We note that \[ \frac{5\eta_t}{4}B^2 - 2 \lr{\eta_t^{-1} - \eta_{t-1}^{-1}}  \geq \frac{5B}{2\sqrt t} - \frac{2B}{2\sqrt t} \geq 0. \]
Thus, we have:
\begin{align*}
    R_T  \leq \sum_{t = 1}^{T} \sum_{i \neq i^*} \frac{81 B}{4\Delta_i t} +  \sqrt{B T_0}  + 2
    \leq  \sum_{i \neq i^*} \frac{21 B \lr{\lr{\ln T}+1}}{\Delta_i} +  8\sqrt{B}  + 2.
\end{align*}
\end{proof}

\section{Proofs of Results with Time-Varying Switching Cost}
\label{appen_varying_switching_cost}

In this regime, the block lengths and the learning rates depend on the sequence of switching costs $\lr{\lambda_n}_{n = 1, 2, \dots}$.
\begin{proof}[Proof of Theorem \ref{th_block_lambda_n}]
The switching costs are positive, which means that the learning rate $\eta_n = \frac{2\sqrt{2}\sqrt{K}}{3\lr{\sum_{s = 1}^n \lambda_s + \frac{\sqrt{K}}{\sqrt{s}}}}$ is positive and non-decreasing. Thus, we can apply Theorem \ref{th_wo_switching} and Lemmas \ref{lem_stab} and \ref{lem_pen} through the rest of the proof. 

We recall that the length of the $n$-{th} block is defined as $|B_n| = \max\left\{\left\lceil \frac{\sqrt{\lambda_n} \sqrt{\sum_{s= 1}^n\lambda_s + \frac{\sqrt{K}}{\sqrt{s}}}}{\sqrt{K}}\right\rceil , 1\right\}$.

Thus, we can bound $\frac{\eta_n}{2}|B_n|^2 $ as:
\begin{align*}
    \frac{\eta_n}{2}|B_n|^2 \leq& \frac{\sqrt{2}\sqrt{K}}{3\lr{\sum_{s = 1}^n \lambda_s + \frac{\sqrt{K}}{\sqrt{s}}}} \lr{ \frac{\sqrt{\lambda_n} \sqrt{\sum_{s= 1}^n\lambda_s + \frac{\sqrt{K}}{\sqrt{s}}}}{\sqrt{K}} + 1}^2\\
   \leq & \frac{\sqrt{2}\sqrt{K}}{3\lr{\sum_{s = 1}^n \lambda_s + \frac{\sqrt{K}}{\sqrt{s}}}} \lr{\frac{\sqrt{\lambda_n} \sqrt{\sum_{s= 1}^n\lambda_s + \frac{\sqrt{K}}{\sqrt{s}}}}{\sqrt{K}}}^2 + \frac{2\sqrt{2}\sqrt{K}}{3\lr{\sum_{s = 1}^n \lambda_s + \frac{\sqrt{K}}{\sqrt{s}}}} \lr{ \frac{\sqrt{\lambda_n} \sqrt{\sum_{s= 1}^n\lambda_s + 
   \frac{\sqrt{K}}{\sqrt{s}}}}{\sqrt{K}}} \\
   & + \frac{\sqrt{2}\sqrt{K}}{3\lr{\sum_{s = 1}^n \lambda_s + \frac{\sqrt{K}}{\sqrt{s}}}} \\
   \leq & \frac{\sqrt{2}\lambda_n}{3\sqrt K}  + \frac{2\sqrt{2}\sqrt{\lambda_n}}{3\lr{K^{1/4}n^{1/4}}}
   + \frac{\sqrt{2}}{3\sqrt{n}} \\
   \leq & \frac{\sqrt{2}\lambda_n}{\sqrt K} + \frac{\sqrt{2}}{\sqrt{n}},
\end{align*}
where we use that $ \frac{1}{\sum_{s= 1}^n\lambda_s + 
   \frac{\sqrt{K}}{\sqrt{s}}} \leq \frac{1}{\sqrt{Kn}}$
and we deduce that
$ \frac{\sqrt{\lambda_n}}{\sqrt{\sum_{s= 1}^n\lambda_s + 
   \frac{\sqrt{K}}{\sqrt{s}}}} \leq \frac{\lambda_n}{\sqrt K}
   + \frac{1}{\sqrt{n}}$ by considering the cases $\lambda_n \geq \frac{\sqrt K}{\sqrt n}$ and $\lambda_n \leq \frac{\sqrt K}{\sqrt n}$.

\paragraph{The Adversarial Regime} 
The weighted switching cost on $N$ blocks is upper bounded by $\sum_{n = 1}^{N} \lambda_n$.
To bound the pseudo-regret, we can directly apply Theorem \ref{th_wo_switching} and get that:
\begin{align*}
    R_T & \leq \sum_{n = 1}^{N} \frac{\eta_n}{2}|B_n|^2 \sqrt K + \frac{4\sqrt K}{\eta_{N}} + 1 \\
    & \leq \sum_{n = 1}^{N} \sqrt{2}\lr{\lambda_n
   + \frac{\sqrt K}{\sqrt{n}}}  + 4\sqrt K \frac{3\lr{\sum_{s = 1}^n \lambda_s + \frac{\sqrt{K}}{\sqrt{s}}}}{2\sqrt{2}\sqrt{K}} + 1 \\
    & \leq 4 \sqrt{2}\sum_{n = 1}^{N} \lambda_n + 8\sqrt{2} \sqrt{KN}
  + 1.
\end{align*}
Combining the pseudo regret and the weighted switching cost finishes this part of the proof.

\paragraph{The Stochastically Constrained Adversarial Regime} 
We start by deriving a bound for the stability term. Let $N_0$ be the smallest number, such that for all $n \geq N_0$, we have $\eta_n |B_n| \leq \frac{1}{4}$. Then, using the last part of Lemma \ref{lem_stab}, we have:
\begin{align*}
    \text{stab} \leq & \sum_{n = 1}^N \lr{\frac{\sqrt{2}\lambda_n}{\sqrt K} + \frac{\sqrt{2}}{\sqrt{n}}} \sum_{i \neq i^*} \lr{ \sqrt{\E[p_{n, i}]} + 2.5 \E[p_{n,i}]} + \sum_{n = 1}^{N_0} \lr{\frac{\sqrt{2}\lambda_n}{\sqrt K} + \frac{\sqrt{2}}{\sqrt{n}}}.
\end{align*}
We now bound the penalty term.
We first need to bound the difference between two successive learning rates.
\begin{align*}
    \eta_{n}^{-1} - \eta_{n-1}^{-1}
    &= \frac{3}{2\sqrt{2}\sqrt{K}} \lr{\sum_{s = 1}^n \lambda_s + \frac{\sqrt{K}}{\sqrt{s}} - \sum_{s = 1}^{n-1} \lambda_s + \frac{\sqrt{K}}{\sqrt{s}}} \\
    &= \frac{3}{2\sqrt{2}\sqrt{K}} \lr{\lambda_n + \frac{\sqrt{K}}{\sqrt{n}}}.
\end{align*}
 
 Then, we apply the second part of Lemma \ref{lem_pen} and get:
 \begin{align*}
     \text{penalty} \leq  \sum_{i \neq i^*} \lr{\frac{3\sqrt{2}}{\sqrt{K}} \lr{\lambda_n + \frac{\sqrt{K}}{\sqrt{n}}}} \lr{ \sqrt{\E[p_{n, i}]} - 0.5\E[p_{n,i}]} + 1.
 \end{align*}
Adding these expressions together and using the self-bounding technique, we have:
\[R_T \leq  \sum_{n = 1}^N 10\sqrt 2\lr{\frac{\lambda_n}{\sqrt K} + \frac{1}{\sqrt{n}}} \sum_{i \neq i^*}  \sqrt{\E[p_{n, i}]} - \sum_{i \neq i^*}  \sum_{n = 1}^N \Delta_i |B_n| \E[p_{n, i}]
+ \sum_{n = 1}^{N_0} \lr{\frac{2\sqrt{2}\lambda_n}{\sqrt K}}
+ 4\sqrt{2N_0} + 2.\]
Finally, we use Lemma \ref{lem_bound_switches} to bound the number of switches, and the fact that $\sqrt{\E[p_{n, i}]} \geq \E[p_{n, i}]$, which gives:
\[ R\lr{T, \lr{\lambda_n}_{n\ge 1}} \leq  \sum_{n = 1}^N\sum_{i \neq i^*}  \lr{\lr{11 \lambda_n  + \lambda_{n + 1} + \frac{10 \sqrt{2}}{\sqrt{n}}}  \sqrt{\E[p_{n, i}]} - \Delta_i |B_n| \E[p_{n, i}]}
+ \sum_{n = 1}^{N_0} \lr{\frac{2\sqrt{2}\lambda_n}{\sqrt K}}
+ 4\sqrt{2N_0} + \lambda_1 + 2. \] 

We then observe that for each term in the first summation, we can upper bound the expression by replacing $\E[p_{n, i}]$ by $x_{n, i} \in [0, \infty)$ and maximizing each term independently on $[0, \infty)$.

Thus, we have:
\[ R\lr{T, \lr{\lambda_n}_{n\ge 1}} \leq  \sum_{n = 1}^N\sum_{i \neq i^*} \frac{ \lr{11 \lambda_n  + \lambda_{n + 1} + \frac{10 \sqrt{2}}{\sqrt{n}}}^2}{4\Delta_i |B_n|}
+ \sum_{n = 1}^{N_0} \lr{\frac{2\sqrt{2}\lambda_n}{\sqrt K}}
+ 4\sqrt{2N_0} + \lambda_1 + 2. \] 
\end{proof}
We move on to the corollary with a parametric form of switching costs.

\begin{proof}[Proof of Corollary \ref{cor_n_alpha}] In this setting, we assume that the sequence of switching costs satisfies $\lambda_n = n^\alpha$ for $\alpha > 0$.
We start by upper bounding $N_0$.
\begin{align*}
    \eta_n |B_n| & \leq  \frac{2\sqrt{2}\sqrt{K}}{3\lr{\sum_{s = 1}^n \lambda_s + \frac{\sqrt{K}}{\sqrt{s}}}} \lr{\frac{\sqrt{\lambda_n} \sqrt{\sum_{s= 1}^n\lambda_s + \frac{\sqrt{K}}{\sqrt{s}}}}{\sqrt{K}} + 1} \\
     &\leq  \frac{2\sqrt{2} \sqrt{\lambda_n}}{3 \sqrt{\sum_{s = 1}^n \lambda_s + \frac{\sqrt{K}}{\sqrt{s}}}} +   \frac{2\sqrt{2}\sqrt{K}}{3\lr{\sum_{s = 1}^n \lambda_s}}\\
     & \leq  \frac{2\sqrt{2} n^{\alpha/2}}{3 \sqrt{\frac{n^{\alpha + 1}}{\alpha + 1}}} +   \frac{2\sqrt{2}}{3 \sqrt{n}} \\
      & \leq  \frac{2\sqrt{2}}{3 \sqrt{n}} \lr{\sqrt{\alpha + 1}+1},
\end{align*}
which is a decreasing sequence of $n$. For all $n \geq \frac{32}{9} \lr{\sqrt{\alpha + 1} + 1}^2$, we have $\eta_n|B_n| \leq \frac{1}{4}$, thus we pick $N_0 = \big\lceil\frac{32}{9} \lr{\sqrt{\alpha + 1} + 1}^2\big\rceil$.

Now we move on to bounding the terms $\frac{ \lr{11 \lambda_n  + \lambda_{n + 1} + \frac{10 \sqrt{2}}{\sqrt{n}}}^2}{4\Delta_i |B_n|}$. Here, the switching costs are increasing, $\lambda_{n + 1} \geq \lambda_n$, and we have:
\begin{align*}
    \frac{ \lr{11 \lambda_n  + \lambda_{n + 1} + \frac{10 \sqrt{2}}{\sqrt{n}}}^2}{4\Delta_i |B_n|}
    \leq \frac{ \lr{12 \lambda_{n + 1} + \frac{10 \sqrt{2}}{\sqrt{n}}}^2}{4\Delta_i |B_n|}
    \leq \frac{ \lr{12^2 \lambda_{n + 1} + 240 \lambda_{n + 1}\frac{\sqrt{2}}{\sqrt n}  + \frac{200}{n}}}{4\Delta_i |B_n|}.
\end{align*}
When $ n < N$ the block has not been truncated and the first term is upper bounded as:
\[ \frac{ 12^2 \lambda_{n+1}^2}{4\Delta_i |B_n|} \leq \frac{ 36 \sqrt{\alpha + 1} \lr{n + 1}^{2\alpha} \sqrt{K}}{\Delta_i n^{\alpha + 1/2}}
    \leq \frac{ 36 \cdot 4^\alpha \sqrt{\alpha + 1} \lr{n}^{\alpha - 1/2} \sqrt{K}}{\Delta_i}, \]
 where  $|B_n| \geq \frac{n^{\alpha + 1/2}}{\sqrt{K}\sqrt{\alpha + 1}}$, and for all $n \geq 1$, we have $\frac{\lr{n + 1}^2}{n} \leq 4n$.
 For the case where $n = N$, we can only lower bound $|B_N|$ by $1$, and we get:
 \[ \frac{ 12^2 \lambda_{N+1}^2}{4\Delta_i |B_N|} \leq \frac{ 36 (N + 1)^{2\alpha}}{\Delta_i}. \]
The second and third terms are directly upper bounded by lower bounding the block length by $1$:
\[ \frac{  240 \frac{\lambda_{n+1}\sqrt{2}}{\sqrt n}}{4 \Delta_i |B_n|} \leq  60 \sqrt{2} \frac{\lr{n+1}^\alpha}{\sqrt n\Delta_i } \leq  60 \sqrt{2} \cdot 2^{\alpha}\frac{\lr{n}^{\alpha-1/2}}{\Delta_i } , \]
and 
\[ \frac{ \frac{200}{n}}{4\Delta_i |B_n|} \leq \frac{50}{n\Delta_i}.
\]
We now sum over $n$, from $1$  to $N-1$, and get:
\begin{align*}
    \sum_{n = 1}^{N-1} n^{\alpha - 1/2} \leq 1 + \int_{1}^N n^{\alpha - 1/2} \leq 1 + \lr{\alpha + 1/2} N^{\alpha + 1/2},
\end{align*}
which is an upper bound which considers the case where $\alpha - \frac{1}{2} \leq 0$ and where $\alpha - \frac{1}{2} \geq 0$. 
We finish the proof by combining these results, and we get:
\begin{align*}
     R\lr{T, \lr{\lambda_n}_{n\ge 1}} 
     \leq  & \sum_{n = 1}^N\sum_{i \neq i^*} \frac{ \lr{11 \lambda_n  + \lambda_{n + 1} + \frac{10 \sqrt{2}}{\sqrt{n}}}^2}{4\Delta_i |B_n|}
+ \sum_{n = 1}^{N_0} \lr{\frac{2\sqrt{2}\lambda_n}{\sqrt K}}
+ 4\sqrt{2N_0} + \lambda_1 + 2 \\
\leq & \sum_{i \neq i^*} \lr{\frac{36 \cdot 4^\alpha \sqrt{\alpha + 1}\lr{\alpha +\frac{1}{2}} N^{\alpha + 1/2} \sqrt K}{\Delta_i} +  {\frac{36 \cdot 4^\alpha \sqrt{\alpha + 1}\sqrt K}{\Delta_i}}} + \frac{36 \lr{N + 1}^{2\alpha}}{\Delta_i} \\ 
& + \sum_{i \neq i^*} \lr{\frac{60\sqrt{2} \cdot 2^{\alpha} \lr{\alpha +\frac{1}{2}} N^{\alpha + 1/2}}{\Delta_i} +  \frac{60\sqrt{2} \cdot 2^{\alpha}}{\Delta_i}} + \frac{60\sqrt{2} \cdot 2^{\alpha} N^{\alpha -1/2}}{\Delta_i} \\
& + \sum_{i \neq i^*} \frac{50 \ln N}{\Delta_i} + \frac{50}{\Delta_i}  + \frac{2\sqrt{2} \lr{\frac{32}{9} \lr{\sqrt{\alpha + 1} + 1}^2+ 2}^{\alpha + 1}}{\lr{\alpha + 1}\sqrt K}
+ 11 \lr{\sqrt{\alpha + 1}} + 20.
\end{align*}
We now upper bound $N$. We first note that the length of the $n$-{th} block is lower bounded by $ \frac{n^{\alpha/2}\sqrt{\sum_{s = 1}^n s^\alpha}}{\sqrt{K}}$.
Using the fact that $\alpha > 0$, we can lower bound ${\sum_{s = 1}^n s^\alpha} \geq \int_{0}^n s^\alpha ds = \frac{n^{\alpha + 1}}{\alpha + 1}$. Thus, we have $|B_n| \geq \frac{n^{\alpha + 1/2}}{\sqrt{(\alpha + 1)K}}$.
Since the block length is an increasing function, for any $\bar N$ we have:
\[ \sum_{n = 1}^{\bar N} |B_n| \geq \int_0^{\bar N} \frac{n^{\alpha + 1/2}}{\sqrt{(\alpha + 1)K}} = \frac{\bar N^{\alpha + 3/2}}{\lr{\alpha + \frac{3}{2}} \sqrt{(\alpha + 1)K}}.\]

We observe that $ \bar N = \lr{\alpha + \frac{3}{2}}^{\frac{2}{2\alpha + 3}}\lr{\alpha + 1}^{\frac{1}{2\alpha + 3}} K^{\frac{1}{2\alpha + 3}} T^{\frac{2}{2\alpha + 3}}$ satisfies $\frac{\bar N^{\alpha + 3/2}}{\lr{\alpha + \frac{3}{2}} \sqrt{(\alpha + 1)K}} = T$.
Thus. we are sure that $N$ is upper bounded by $\lr{\alpha + \frac{3}{2}}^{\frac{2}{2\alpha + 3}}\lr{\alpha + 1}^{\frac{1}{2\alpha + 3}} K^{\frac{1}{2\alpha + 3}} T^{\frac{2}{2\alpha + 3}} + 1.$
All that remains is to upper bound $N$ in the pseudo-regret bound.
\begin{align*}
     R\lr{T, \lr{\lambda_n}_{n\ge 1}} 
\leq & \sum_{i \neq i^*}\frac{36 \cdot 4^\alpha \sqrt{\alpha + 1}\lr{\alpha +\frac{1}{2}} \lr{\alpha + \frac{3}{2}}^{\frac{2\alpha +1}{2\alpha + 3}}\lr{\alpha + 1}^{\frac{\alpha + 1/2}{2\alpha + 3}} T^{\frac{2 \alpha +1}{2\alpha + 3}} K^{\frac{2 \alpha +2}{2\alpha + 3}}}{\Delta_i} \\
& + \sum_{i \neq i^*} {\frac{36 \lr{\alpha + 2}\cdot 4^\alpha \sqrt{\alpha + 1}\sqrt K}{\Delta_i}} + \frac{36 \lr{N + 1}^{2\alpha}}{\Delta_i} \\ 
& + \sum_{i \neq i^*} \frac{60\sqrt{2} \cdot 2^{\alpha} \lr{\alpha +\frac{1}{2}} N^{\alpha + 1/2}}{\Delta_i} +  \frac{60\sqrt{2} \cdot 2^{\alpha}}{\Delta_i}  + \frac{60\sqrt{2} \cdot 2^{\alpha} N^{\alpha -1/2}}{\Delta_i} \\
& + \sum_{i \neq i^*} \frac{50 \ln T}{\Delta_i} + \frac{50}{\Delta_i}  + \frac{2\sqrt{2} \lr{\frac{32}{9} \lr{\sqrt{\alpha + 1} + 1}^2+ 2}^{\alpha + 1}}{\lr{\alpha + 1}\sqrt K}
+ 11 \lr{\sqrt{\alpha + 1}} + 20.
\end{align*}
\end{proof}

\section{Supplementary Experiments}
\label{appen:experiments}
In this section, we present additional experiments highlighting the robustness of \TsallisB. In all the experiments we take $K=8$. Similar results were observed for other values of $K$.

First, we consider stochastically constrained adversarial sequences. We take a setting, inspired by \citet{ZS21}, where the environment alternates between two phases. In the first one, the expected loss of the best arm is $0$, and the expected loss of the suboptimal arms is $\Delta$. In the second phase, the expected loss of the best arm is $1 - \Delta$, and the expected loss of suboptimal arms is $1$. At all rounds, the gap between the expected loss of the best arm and any other arm remains constant. In this experiment, the environment generates phases of exponentially increasing length with he $i^{th}$ phase starting at index $1.6^i$.
We observe in Figures~\ref{fig:exp4_non_stoch_easy} and \ref{fig:exp4_non_stoch_hard} that the BaSE algorithm with arithmetic blocks is not robust in this regime. BaSE algorithm with geometric blocks performs really well against this sequence.
\TsallisB \ performs well in both experiments, achieving a regret with switching costs similar to algorithms without blocks when the switching cost is small, and a much better performance when switching becomes costly.

\begin{figure}[h!]
  \begin{minipage}{\linewidth}
   \centering
    \includegraphics[width = 0.3\linewidth]{images/legend.png}
  \end{minipage}
   \begin{minipage}{0.48\textwidth}
        \centering
       \includegraphics[trim={0 0.2cm 0 0},clip,width=\linewidth]{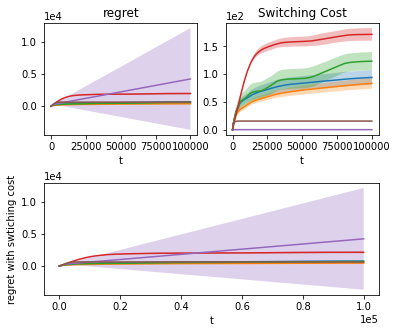}
    \caption{Stochastically constrained adversarial losses, $\Delta = 0.2$ and $\lambda = 0.025$. The shaded area represents one standard deviation above and below the average measured on 10 repetitions of the experiment. The standard deviation of Batched Bandits is large because the algorithm eliminates the optimal arms in some of the runs of the experiment, but not all of them.}
      \label{fig:exp4_non_stoch_easy}
    \end{minipage}\hfill
    \begin{minipage}{0.48\textwidth}
        \centering
        \includegraphics[trim={0 0.2cm 0 0},clip, width=\linewidth]{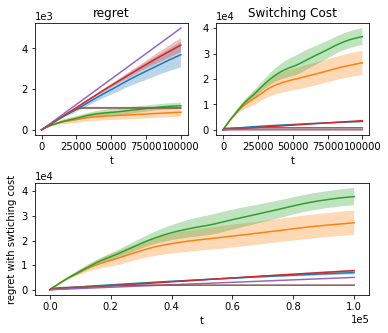}
  \caption{Stochastically constrained adversarial losses. $\Delta = 0.05$ and $\lambda = 1$.  The shaded area represents one standard deviation above and below the average measured on 10 repetitions of the experiment.}
   \label{fig:exp4_non_stoch_hard}
    \end{minipage}
\end{figure}

In the second experiment we construct an adversarial sequence that easily breaks BaSE with both arithmetic and geometric grids. We also observe the behavior of the other algorithms in this context. The sequence of losses is constructed in the following way: in the first $\sqrt{KT \ln(KT)}$ rounds, one arm suffers a loss of 0, while all the other arms suffer a loss of 1. After the $\sqrt{KT \ln(KT)}$ rounds the losses are reversed, so the first arm suffers a loss of $1$ and all other arms suffer a loss of $0$.
In Figure~\ref{fig:exp_adv} we observe that the BaSE algorithm with both arithmetic and geometric grid suffers linear regret, as it, with high probability, eliminates the best arm based on the first rounds. We can see that with this sequence, \TsallisB \ achieves both a very low regret and a low number of switches, even though at the end of the game, $K -1$ arms have the same performance, and only one is suboptimal.

\begin{figure}[h!]
 \centering
    \begin{minipage}{0.48\textwidth}
        \includegraphics[trim={0 0.2cm 0 0},clip, width=\linewidth]{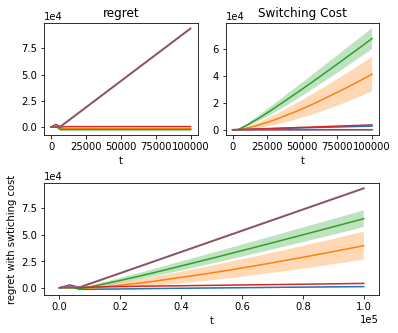}
  \caption{Regret against a deterministic adversarial sequence described in the text with $\lambda = 1$. The shaded area represents one standard deviation above and below the average measured on 10 repetitions of the experiment. The curves for batched bandits with arithmetic blocks and batched bandits with geometric blocks almost coincide and they are the highest ones.}
   \label{fig:exp_adv}
    \end{minipage}
\end{figure}

In the last experiment we test robustness of \TsallisB \ in a stochastic setting with several best arms and a stochastically constrained adversarial setting with several best arms. We take $\Delta = 0.2$ and $\lambda = 1$ and change the number of optimal arms from 1 to 7 while keeping the total number of arms $K = 8$. We recall that \citet{ZS21} experimentally observed that in a stochastic setting without switching costs the regret of Tsallis-INF decreases with the increase of the number of best arms, suggesting that the requirement on uniqueness of the best arm is an artifact of the analysis, rather than a real limitation of the algorithm. In Figures~\ref{fig:multi_stoch_data} and \ref{fig:multi_non_stoch_data} we observe that in the setting with switching costs the picture is different, because switching between best arms is costly. We note that \TsallisB\ still has the adversarial regret guarantee of $\scO\big((\lambda K)^{1/3}T^{2/3} + \sqrt{KT}\big)$ in both settings, so the regret is still under control, but there is a clear increase in the regret as the number of optimal arms grows beyond 1. Therefore, the experiments seem to suggest that the improved regret scaling with $T^{1/3}$ only holds under the assumption on uniqueness of the best arm and elimination of this assumption will require modification of the algorithm.


\begin{figure}[h!]
   \begin{minipage}{\linewidth}
   \centering
    \includegraphics[width = 0.25\linewidth]{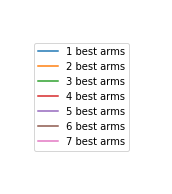}
  \end{minipage}
   \begin{minipage}{0.48\textwidth}
        \centering
       \includegraphics[trim={0 0.2cm 0 0},clip,width=\linewidth]{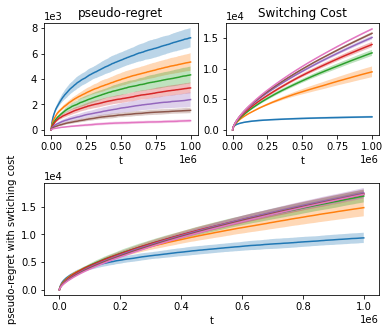}
    \caption{The performance of \TsallisB\ under stochastic losses and several optimal arms. $K = 8$, $\Delta = 0.2$ and $\lambda = 1$. The shaded area represents one standard deviation above and below the average measured on 10 repetitions of the experiment.}
      \label{fig:multi_stoch_data}
    \end{minipage}\hfill
    \begin{minipage}{0.48\textwidth}
        \centering
        \includegraphics[trim={0 0.2cm 0 0},clip, width=\linewidth]{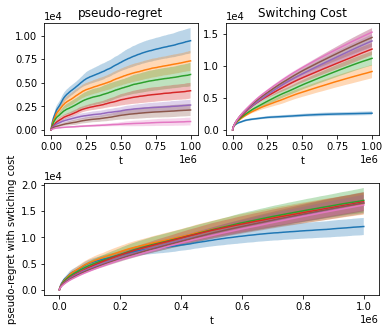}
  \caption{The performance of \TsallisB\ under stochastically constrained adversarial losses and several optimal arms. $K = 8$, $\Delta = 0.2$ and $\lambda = 1$. The shaded area represents one standard deviation above and below the average measured on 10 repetitions of the experiment.}
   \label{fig:multi_non_stoch_data}
    \end{minipage}
\end{figure}

\end{document}